\newcolumntype{Y}{>{\centering\arraybackslash}X}
\definecolor{Gray}{gray}{0.98}
\definecolor{LightCyan}{rgb}{0.88,1,1}
\newcolumntype{g}{>{\columncolor{Gray}}c}
\DeclarePairedDelimiter\abs{\lvert}{\rvert}%
\DeclarePairedDelimiter\norm{\lVert}{\rVert}%
\let\oldabs\abs
\def\abs{\@ifstar{\oldabs}{\oldabs*}}
\let\oldnorm\norm
\def\norm{\@ifstar{\oldnorm}{\oldnorm*}}
\DeclarePairedDelimiter\ceil{\lceil}{\rceil}
\renewcommand{\*}[1]{{\pmb{#1}}}
\newcommand{\eat}[1]{}
\newcommand{\states}{\mathcal{S}}
\newcommand{\actions}{\mathcal{A}}
\newcommand*{\tr}{^{\mkern-1.5mu\mathsf{T}}}
\renewcommand{\P}{\mathbb{P}}
\newenvironment{mprog}{\begin{array}{>{\displaystyle}l>{\displaystyle}l>{\displaystyle}l}}{\end{array}}
\newcommand{\stc}{\\[1ex]  \mbox{s.t.} &}
\newcommand{\maximize}[1]{\max_{#1} &}
\newcommand{\one}{\mathbf{1}}
\newcommand{\Real}{\mathbb{R}}
\renewcommand{\ss}{\,:\,}
\newcommand{\zeros}{\mathbf{0}}
\newcommand{\linf}{L_\infty}
\newcommand{\lw}{_{1,\*w}}
\newcommand{\liw}{_{\infty,\*w}}
\newcommand{\real}{\mathbb{R}}
\newcommand{\prob}{\mathbb{P}}
\newcommand{\simplexs}{\Delta ^{ \states }}
\newcommand{\ambset}{\mathcal{P}}
\newcommand{\expect}{\mathbb{E}}
\newcommand{\sa}{_{s,a}}
\newcommand{\qeu}{\mathcal{Q}}
\newcommand{\dataset}{\mathcal{D}}
\newcommand{\opt}{^\star}
\newcommand{\citeasnoun}[1]{\citeauthor{#1} (\citeyear{#1})}
\newtheorem{theorem}{Theorem}[section]
\newtheorem{corollary}{Corollary}[section]
\newtheorem{lemma}[theorem]{Lemma}
\DeclareMathOperator{\Med}{Med}
\newcommand{\bahram}[1]{\textcolor{red}{[#1]}}
\title{Optimizing Norm-Bounded Weighted Ambiguity Sets for Robust MDPs}
\author{
	Reazul Hasan Russel \thanks{Equal contribution} \hspace{10pt} Bahram Behzadian \footnotemark[1] \hspace{10pt} Marek Petrik\\ 
	Department of Computer Science\\
	University of New Hampshire\\
	Durham, NH 03824 USA\\
	{\tt rrussel}, {\tt bahram}, {\tt mpetrik} @ {\tt cs.unh.edu}
}
\begin{document}

\maketitle

\begin{abstract}
  Optimal policies in Markov decision processes (MDPs) are very sensitive to model misspecification. This raises serious concerns about deploying them in high-stake domains. Robust MDPs (RMDP) provide a promising framework to mitigate vulnerabilities by computing policies with worst-case guarantees in reinforcement learning. The solution quality of an RMDP depends on the ambiguity set, which is a quantification of model uncertainties. In this paper, we propose a new approach for optimizing the shape of the ambiguity sets for RMDPs. Our method departs from the conventional idea of constructing a norm-bounded uniform and symmetric ambiguity set. We instead argue that the structure of a \emph{near-optimal} ambiguity set is problem specific. Our proposed method computes a weight parameter from the value functions, and these weights then drive the shape of the ambiguity sets. Our theoretical analysis demonstrates the rationale of the proposed idea. We apply our method to several different problem domains, and the empirical results further furnish the practical promise of weighted near-optimal ambiguity sets.
\end{abstract}

\section{Introduction} \label{sec:introduction}
Markov decision processes (MDPs) provide a framework for representing dynamic decision-making problems under uncertainty~\cite{Bertsekas1996,puterman2005,sutton2018reinforcement}. An MDP model assumes that the exact transition probabilities and rewards are available. However, for the most realistic control problems, the underlying MDP model is not known precisely. While one may have full access to state space and actions, the transition probabilities are rarely known with confidence and must be instead estimated from data. Even small transition errors can significantly degrade the quality of the optimal policy~\cite{Wiesemann2013}. This work focuses primarily on the \emph{reinforcement learning} setting in which transition probabilities are estimated from samples, and the errors are due to having a small sample.

Robust MDPs~(RMDPs) are a convenient model for computing policies that are insensitive to small errors in transition probabilities ~\cite{Nilim2004,Iyengar2005,Wiesemann2013}. The basic idea of RMDPs is to compute the best policy for the worst-case realization of transition probabilities. The model can be seen as a zero-sum game against an adversarial nature. The decision-maker chooses the best action, and nature chooses the worst-case transition probability. The set of possible transition probabilities that nature can choose from is known as the \emph{ambiguity set} or the uncertainty set.

The main challenge in using RMDPs is computing solutions that are robust without being overly conservative~\cite{petrik2019beyond, Russel2018, Tirinzoni2018, petrik2016safe}. The trade-off between the robustness and average-case performance is determined primarily by choice of the ambiguity set. The typical optimization problem solved by the adversarial nature in RMDPs is as follows:
\[ \min_{\*p \in \Delta^S} \left\{ \*p\tr \*v ~:~ \norm{\*p - \bar{\*p}}_1 \le \psi \right\}, \]
where $\bar{\*p}$ is the expected (or nominal) transition probability, $\Delta^S$ is the probability simplex over $S$ states, and $\psi$ is the size of the ambiguity set. That is, the ambiguity set is defined in terms of the $L_1$ distance from the nominal solution. A large ambiguity set, of course, leads to more conservative solutions~\cite{gupta2019near}, but the \emph{shape} of the set often plays an even more important role.

As the main contribution, we develop 1) a new technique for understanding the impacts of the ambiguity set choice on solution quality and 2) an algorithm that can optimize the shape of the ambiguity set for a particular problem. Our results make it possible to answer questions like: ``Should I use $L_1$ or $L_\infty$ norm to define my ambiguity set?'', or ``Can I get better results when I use a weighted $L_1$ norm?'' We show that the set shape is primarily driven by the structure of the value function. For example, an $L_\infty$ set is likely to work better than the $L_1$ set when the value function is sparse. As a secondary contribution, we also establish new finite-sample guarantees for transition probabilities with $L_\infty$, weighted $L_1$, and weighted $L_\infty$ norms.

\section{Framework}\label{sec:framework}

Our overall goal is to solve an MDP that is known only approximately. This is relevant, for example, in model-based reinforcement learning when the MDP is estimated from an incomplete dataset. The MDP has a finite number of states $\states = \{1, \ldots, S \}$ and actions $\actions = \{1, \ldots, A\}$. The decision-maker can take any action $a \in \actions$ in every state $s \in \states$ and receives a reward $r_{s,a} \in \real$ . The action results in a transition to the next state $s'$ according to the transition probabilities $\*p_{s,a}\opt \in \simplexs$. We use $P\opt : \states\times\actions\to\simplexs$ to denote the transition kernel and $\*p_{s,a}$ to denote the vector of transition probabilities from state $s$ and action $a$. Note that the value $P\opt$ represents the true transition probability which may be unknown. 

The objective in solving the MDP is to compute a policy $\pi: \states \rightarrow \actions$ that maximizes the infinite-horizon $\gamma$-discounted return $\rho$. The discounted return for a policy $\pi$ and a given transition kernel $P$ is defined as follows:
$\rho(\pi,P) = \expect \left[ \sum_{t=0}^{\infty} \gamma ^t \cdot r_{S_t, \pi(S_t)} \right]$. Ideally, the optimal policy $\pi\opt$ could be computed to maximize the true discounted return $\pi\opt \in \arg \max_{\pi\in\Pi} \rho(\pi, P\opt)$, where $\Pi$ is the set of all policies. This is often impossible, since the true transition probabilities $P\opt$ are, unfortunately, rarely known with precision.

Robust MDPs address the challenge of unknown $P\opt$ by considering a broader set of possible transition probabilities. Instead of computing the best policy for a specific transition kernel $P$, the goal is to compute the best policy for a range of kernels $\ambset$. In other words, the objective is to compute a policy that is best with respect to the worst-case choice of the transition probabilities:
\begin{equation}\label{eq:rmdp}
\max_{\pi \in \Pi_R} \min_{P \in \ambset} \rho (\pi , P)~.
\end{equation}
Because solving the general problem in \eqref{eq:rmdp} is NP-hard~\cite{Nilim2004,Iyengar2005}, most research has focused on so-called $(s,a)$-rectangular ambiguity sets $\ambset$~\cite{Wiesemann2013,LeTallec2007}. We use $\ambset_{s,a} \subseteq \simplexs$ to denote the ambiguity set for a state $s$ and an action $a$. The optimal robust value function $\hat{\*v}\opt \in \real^S$ in $(s,a)$-rectangular RMDPs must satisfy the robust Bellman optimality condition:
\begin{equation}  \label{eq:robust_update}
\hat{\*v}\opt(s) = \max_{a \in \actions}\min_{\*p \in \ambset_{s,a}} r_{s,a} + \gamma \, \*p\tr \hat{\*v}\opt ~.
\end{equation}
The ambiguity set $\ambset_{s,a}$ is typically defined as:
\[ \ambset_{s,a} = \left\{ \*p \in \simplexs ~:~ \norm{\*p - \bar{\*p}\sa }_1 \le \psi_{s,a} \right\}, \]
where $\bar{\*p}\sa$ is the nominal transition probability that is estimated from data. The size $\psi_{s,a}$ determines the level of robustness: a larger $\psi_{s,a}$ leads to more robust solutions.

When facing limited sample availability, the size  $\psi_{s,a}$ is usually chosen such that $\*p\opt$ is contained in the ambiguity set with probability  $1-\delta$: 
\[\prob \left[ \*p\opt\sa \in \ambset\sa \right] \ge 1 - \delta ~,\]
where $\delta\in (0,1]$ is the confidence level. Using standard frequentist bounds, this requirement translates to~\cite{petrik2016safe,thomas2015high,Weissman2003xx,petrik2019beyond}: \[
\psi_{s,a} = \sqrt{\frac{2}{n\sa} \log \frac{SA2^S}{\delta}} ~,
\]
where $n\sa$ is the number of transitions from state $s$ by taking action $a$ in $\dataset$. One important benefit of using ambiguity sets of this type is that the solution of the RMDP provides a guarantee on the return of the MDP with confidence $1-\delta$.

\paragraph{Research Objective.} The goal of this work is to design ambiguity sets that provide the highest possible guaranteed return for a given confidence level of $1-\delta$. This problem can be loosely formalized for each $s$ and $a$ as follows:
\begin{equation} \label{eq:main_objective}
\begin{mprog}
\maximize{\ambset\sa} \min_{\*p \in \ambset_{s,a}} r_{s,a} + \gamma \, \*p\tr \hat{\*v}\opt
\stc \prob \left[ \*p\opt\sa \in \ambset\sa \right] \ge 1 - \delta~.
\end{mprog}
\end{equation}
Note that, since the Bellman operator is monotone, maximizing the value of each state individually maximizes the entire value function. The distributionally-constrained optimization problem in \eqref{eq:main_objective} is, of course, intractable~\cite{Ben-Tal2009}. As stated, it also relies on knowing the optimal robust value function $\hat{\*v}\opt$, which itself depends on the choice of $\ambset$. We instead examine a version of \eqref{eq:main_objective} restricted to optimizing the weights of an $L_p$ norm and assume that a rough estimate of $\hat{\*v}\opt$ is available.

\section{Value-Function Driven Ambiguity Sets} \label{sec:shape}
In this section, we outline the general approach to tackling the desired optimization in \eqref{eq:main_objective}. We relax the problem and use strong duality theory to get bounds that can be optimized tractably. Since this section is restricted to a single state and action, we drop the state and action subscript throughout.

The general approach to the construction of a good ambiguity set will rely on relaxing the robust optimization problem. This relaxation makes it possible to get an analytical expression for the robust problem and use it to guide the selection between different sets $\ambset$. In the remainder of the section, we use $\*z$ to denote a given estimate of the optimal robust value function. Recall the robust Bellman update \eqref{eq:robust_update} can be simplified as follows:
\[
q(\*z) = \min_{\*p \in \simplexs} \left\{ {\*p}\tr \*z : \norm{ \*p - \bar{\*p} } \leq \psi \right\}~,
\]
since $r_{s,a}$ and $\gamma$ are constants independent of $\*p$. The value of $q(\*z)$ represents the expected value of the next state. Notice that the optimization is stated in terms of a generic norm.

We can now derive a \emph{lower} bound on the value $q$. We later choose the shape of the ambiguity set to maximize this lower bound. By relaxing the non-negativity constraints on $\*p$, we get the following optimization problem:
\[ q(\*z) \ge \min_{\*p \in \real^S} \left\{ {\*p}\tr \*z : \norm{ \*p - \bar{\*p} } \leq \psi,\; \one\tr \*p = 1 \right\}~.
 \]
Here, $\one$ is a vector of all ones of the appropriate size. Dualizing this optimization problem and following algebraic manipulation, we get the reformulation described in the following theorem.

\begin{theorem}\label{thm:choose_weights}
	The estimate of expected next value can be lower bounded as follows:
	\begin{equation} \label{eq:lower_bound_dual}
	q(\*z) \ge \bar{\*p}\tr  \*z -  \min_\lambda  ~~  \psi \norm{ \*z + \lambda \*1}_\star~.
	\end{equation}
\end{theorem}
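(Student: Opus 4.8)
The plan is to derive the bound directly from the relaxed program that immediately precedes the theorem,
\[ q(\*z) \ge \min_{\*p \in \real^S}\left\{ \*p\tr \*z ~:~ \norm{\*p - \bar{\*p}} \le \psi,\ \one\tr \*p = 1 \right\}, \]
by dualizing only the single linear equality constraint $\one\tr \*p = 1$ while keeping the norm ball as a hard constraint. Since we only want a lower bound on $q(\*z)$, weak Lagrangian duality suffices and no constraint qualification needs to be verified (though Slater's condition does hold, so the relaxed program is in fact equal to the dual and the bound is the tightest one obtainable from this relaxation).

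The first step is a change of variables $\*d = \*p - \bar{\*p}$. Because $\bar{\*p}$ is a probability vector, $\one\tr\bar{\*p} = 1$, so the equality constraint becomes $\one\tr\*d = 0$, the norm constraint becomes $\norm{\*d}\le\psi$, and the objective becomes $\bar{\*p}\tr\*z + \*d\tr\*z$. Hence the relaxed program equals $\bar{\*p}\tr\*z + \min\{\*d\tr\*z : \norm{\*d}\le\psi,\ \one\tr\*d = 0\}$. Next I introduce a multiplier $\lambda\in\real$ for $\one\tr\*d = 0$: for any fixed $\lambda$, restricting to the hyperplane $\{\one\tr\*d = 0\}$ only shrinks the feasible set, and on that set $\*d\tr\*z = \*d\tr(\*z+\lambda\one)$, so
\[ \min_{\norm{\*d}\le\psi,\ \one\tr\*d=0} \*d\tr\*z \;=\; \min_{\norm{\*d}\le\psi,\ \one\tr\*d=0} \*d\tr(\*z+\lambda\one) \;\ge\; \min_{\norm{\*d}\le\psi} \*d\tr(\*z+\lambda\one). \]
The last minimum is evaluated from the definition of the dual norm, $\max_{\norm{\*d}\le\psi}\*d\tr\*y = \psi\norm{\*y}_\star$ together with symmetry of the ball, giving $\min_{\norm{\*d}\le\psi}\*d\tr(\*z+\lambda\one) = -\psi\norm{\*z+\lambda\one}_\star$. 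Since this inequality holds for every $\lambda$, I take the supremum over $\lambda$ on the right; as $\lambda\mapsto\psi\norm{\*z+\lambda\one}_\star$ is convex and coercive (because $\one\neq\zeros$) the infimum is attained, so $\sup_\lambda(-\psi\norm{\*z+\lambda\one}_\star) = -\min_\lambda\psi\norm{\*z+\lambda\one}_\star$. Combining with the displayed chain and adding back $\bar{\*p}\tr\*z$ yields exactly \eqref{eq:lower_bound_dual}.

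The argument is essentially routine; the only place that warrants care is getting the inequality direction right when passing to the Lagrangian relaxation — it is the \emph{dropping} of the constraint $\one\tr\*d = 0$ that decreases the minimum, which is what makes the dual expression a valid lower bound. A minor bookkeeping point is the sign of $\lambda$: the manipulation above produces $\*z + \lambda\one$ with $\lambda$ ranging over all of $\real$, matching the statement, whereas dualizing $\one\tr\*p = 1$ directly would produce $\*z - \lambda\one$ and require the harmless substitution $\lambda\mapsto-\lambda$.
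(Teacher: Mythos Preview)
Your proof is correct and follows essentially the same route as the paper's own argument: shift variables to $\*d = \*p - \bar{\*p}$, dualize the single equality constraint $\one\tr\*d = 0$, and evaluate the unconstrained norm-ball minimum via the dual norm. The only minor difference is presentational: the paper passes to a maximization form before introducing the Lagrange multiplier and implicitly uses strong duality, whereas you keep the minimization form and rely only on weak duality (dropping the hyperplane constraint can only decrease the minimum), which is all that is needed for the inequality in \eqref{eq:lower_bound_dual}.
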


The result in \cref{thm:choose_weights} relies on the \emph{dual norm}, which is defined as:
\[\norm{\*z}_{\star}=\sup\{\*z^{\intercal } \*x\;|\; \norm{\*x} \le 1 \}~.\]	
It is well known that dual norms to $L_1, L_2$, and  $L_\infty$ are norms $L_\infty, L_2$, and $ L_1$ respectively.

The lower bound in \eqref{eq:lower_bound_dual} is still not quite analytical as it involves solving an optimization problem. This is, however, a single-dimensional optimization, and we show that it does have an analytical form for common norm choices. In the remainder of the section, we derive the specific form of \cref{thm:choose_weights} for weighted $L_1$ and $L_\infty$ norms. We also describe algorithms that optimize the weights in order to maximize the expected robust value.

We generalize the results also to weighted $p$-norms, which are usually defined as follows. The weighted $L_1$ and $L_\infty$ norms for a set of \emph{positive} weights $\*w \in \real_+^S$ and $\*w > \zeros$ are defined as:
\[ \norm{\*z}_{1,\*w}  = \sum_{i=1}^S w_i \lvert z_i \rvert~, \quad  \norm{\*z}_{\infty,\*w}  = \max_{i=1,\ldots,S} w_i \lvert z_i \rvert~. \]

Using this fact, \cref{thm:choose_weights} can be specialized to $L_1$  weighted ambiguity sets as follows.
\begin{corollary}[Weighted $L_1$ Ambiguity Set] 
	\label{cor:ambiguity_l1_bound}
	Suppose that $q(\*z)$ is defined in terms of a weighted $L_\infty$ norm for some $\*w > \zeros$:
	\[
	q(\*z) = \min_{\*p \in \simplexs} \left\{ {\*p}\tr \*z : \norm{ \*p - \bar{\*p} }_{1,\*w} \leq \psi \right\}~.
	\]
	Then $q(\*z)$ can be lower-bounded as follows:
	\[ q(\*z) \ge \bar{\*p}\tr  \*z - \psi \norm{ \*z - \lambda \*1}_{\infty, \frac{1}{\*w}} ~, \]
	for any $\lambda$. Moreover, when $\*w = \one$, the bound is tightest when $\lambda = (\max_i z_i + \min_i z_i) / 2$ and the bound turns to $q(\*z) \ge \bar{\*p}\tr  \*z -  \frac{\psi}{2} \norm{ \*z }_{s} $ with $\norm{\cdot}_s$ representing the \emph{span semi-norm}.	
\end{corollary}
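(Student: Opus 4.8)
The plan is to apply \cref{thm:choose_weights} with the generic norm instantiated as the weighted $L_1$ norm $\norm{\cdot}_{1,\*w}$, compute the corresponding dual norm, and then make the one‑dimensional minimization over $\lambda$ explicit in the special case $\*w = \one$.

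\textbf{Step 1: identify the dual norm.} First I would show that the dual of $\norm{\cdot}_{1,\*w}$ is $\norm{\cdot}_{\infty,1/\*w}$. Starting from $\norm{\*z}_\star = \sup\{\*z\tr\*x : \norm{\*x}_{1,\*w}\le 1\}$ and substituting $y_i = w_i x_i$ (legitimate since $\*w > \zeros$), the constraint becomes $\norm{\*y}_1\le 1$ and the objective becomes $\sum_i (z_i/w_i)\,y_i$, so the supremum equals $\max_i \lvert z_i\rvert/w_i = \norm{\*z}_{\infty,1/\*w}$.

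\textbf{Step 2: specialize \cref{thm:choose_weights}.} Plugging this into \eqref{eq:lower_bound_dual} gives $q(\*z)\ge \bar{\*p}\tr\*z - \min_\lambda \psi\,\norm{\*z+\lambda\one}_{\infty,1/\*w}$. Since $\min_\lambda g(\lambda)\le g(\lambda')$ for every $\lambda'$, weakening the right‑hand side to a fixed value of the dummy variable preserves the inequality; relabeling $\lambda\mapsto-\lambda$ then yields $q(\*z)\ge \bar{\*p}\tr\*z - \psi\,\norm{\*z-\lambda\one}_{\infty,1/\*w}$ for any $\lambda$, which is the first claim.

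\textbf{Step 3: optimize $\lambda$ for $\*w = \one$.} With unit weights the penalty reduces to $\max_i\lvert z_i-\lambda\rvert$. For $\lambda$ inside $[\min_i z_i,\max_i z_i]$ this equals $\max\{\lambda-\min_i z_i,\ \max_i z_i-\lambda\}$, which is smallest when the two terms coincide, i.e.\ at $\lambda=(\max_i z_i+\min_i z_i)/2$, with value $(\max_i z_i-\min_i z_i)/2=\tfrac12\norm{\*z}_s$; any $\lambda$ outside the interval only enlarges the maximum. Substituting back gives $q(\*z)\ge\bar{\*p}\tr\*z-\tfrac{\psi}{2}\norm{\*z}_s$. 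I do not anticipate a serious obstacle: \cref{thm:choose_weights} supplies the duality step, leaving only the change of variables of Step~1 and the elementary Chebyshev‑center computation of Step~3; the only points requiring care are the sign bookkeeping in Step~2 and noting that $1/\*w$ is well defined thanks to the standing assumption $\*w>\zeros$.
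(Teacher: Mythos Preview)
Your proposal is correct and follows essentially the same route the paper intends: instantiate \cref{thm:choose_weights} with the weighted $L_1$ norm, identify its dual as $\norm{\cdot}_{\infty,1/\*w}$ (the paper does this via a H\"older-type inequality, you via the cleaner substitution $y_i=w_ix_i$), and then handle the sign of $\lambda$ and the Chebyshev-center minimization for $\*w=\one$. The paper does not spell out Step~3 explicitly, so your argument there is a welcome addition rather than a deviation.
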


Since the dual norm of a dual norm is the original norm, we also get a similar result for weighted $L_\infty$ ambiguity sets.
\begin{corollary}[Weighted $L_\infty$ Ambiguity Set]
	\label{cor:ambiguity_l8_bound}
	Suppose that $q(\*z)$ is defined in terms of a weighted $L_\infty$ norm for some $\*w > \zeros$:
	\[
	q(\*z) = \min_{\*p \in \simplexs} \left\{ {\*p}\tr \*z : \norm{ \*p - \bar{\*p} }_{\infty,\*w} \leq \psi \right\}~.
	\]
	Then $q(\*z)$ can be lower-bounded as follows:
	\[ q(\*z) \ge \bar{\*p}\tr  \*z - \psi \norm{ \*z - \lambda \*1}_{1, \frac{1}{\*w}} ~, \]
	for any $\lambda$. Moreover, when $\*w = \one$, the bound is tightest when $\lambda$ is the \emph{median} of $\*z$.
\end{corollary}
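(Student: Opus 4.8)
The plan is to apply \cref{thm:choose_weights} directly, taking the generic norm there to be the weighted $L_\infty$ norm $\norm{\cdot}_{\infty,\*w}$. The only genuine content is then (i) identifying the dual of this norm and (ii) specializing the resulting one-dimensional minimization over $\lambda$. An alternative route is to dualize \cref{cor:ambiguity_l1_bound} using that the dual of a dual norm is the original norm, but computing the dual norm from scratch is just as short.

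\textbf{Step 1: the dual norm.} Let $D = \diag(\*w)$; since $\*w > \zeros$ the matrix $D$ is invertible and $\norm{\*x}_{\infty,\*w} = \norm{D\*x}_\infty$. Substituting $\*y = D\*x$ into the definition of the dual norm, the constraint $\norm{\*x}_{\infty,\*w}\le 1$ becomes $\norm{\*y}_\infty\le 1$, so
\[
\norm{\*z}_\star \;=\; \sup_{\norm{\*y}_\infty\le 1}\; \*z\tr D^{-1}\*y \;=\; \norm{D^{-1}\*z}_1 \;=\; \sum_{i=1}^S \frac{\abs{z_i}}{w_i} \;=\; \norm{\*z}_{1,\frac{1}{\*w}},
\]
where the middle step is the classical fact that $L_\infty$ has dual $L_1$. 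Substituting into \eqref{eq:lower_bound_dual} yields $q(\*z)\ge \bar{\*p}\tr\*z - \min_\lambda \psi\,\norm{\*z+\lambda\*1}_{1,\frac{1}{\*w}}$. The relabeling $\lambda\mapsto -\lambda$ leaves this minimization unchanged, so the minimum equals $\min_\lambda \psi\,\norm{\*z-\lambda\*1}_{1,\frac{1}{\*w}}$, and dropping the minimum can only weaken the inequality; hence $q(\*z)\ge \bar{\*p}\tr\*z - \psi\,\norm{\*z-\lambda\*1}_{1,\frac{1}{\*w}}$ for every $\lambda\in\real$, which is the first claim.

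\textbf{Step 2: the optimal $\lambda$ when $\*w=\one$.} Here the bound is tightest at any minimizer of $f(\lambda) = \sum_{i=1}^S \abs{z_i-\lambda}$. This $f$ is convex and piecewise linear, and wherever it is differentiable $f'(\lambda) = \#\{i : z_i < \lambda\} - \#\{i : z_i > \lambda\}$; this quantity is nonpositive for $\lambda$ below the median of $\*z$ and nonnegative above it, so by convexity a global minimizer is attained at the median (any point of the median interval when $S$ is even). I would spell out this counting/subgradient argument to keep the proof self-contained.

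I do not foresee a real obstacle. The one place that needs care is the bookkeeping in Step 1 — keeping $D$ versus $D^{-1}$ straight so that the weights get inverted — together with the harmless $\lambda\leftrightarrow -\lambda$ relabeling needed to match the sign convention in the statement; the median fact in Step 2 is standard once the sign of the subgradient is written down correctly.
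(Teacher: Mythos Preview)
Your proposal is correct and follows essentially the same route as the paper: apply \cref{thm:choose_weights} with the dual of the weighted $L_\infty$ norm, then identify the minimizing $\lambda$. The paper justifies the dual-norm step by the one-liner ``the dual of a dual norm is the original norm'' (so it inherits the result from \cref{cor:ambiguity_l1_bound}), and handles the median claim by invoking the quantile-regression characterization; your direct computation of the dual via $D=\diag(\*w)$ and your subgradient argument for the median are simply more self-contained versions of the same two steps.
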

	The optimal $\lambda$ being a median follows because maximization over $\lambda$ values is identical to the formulation of the optimization problem for the \emph{quantile regression}. 
	
The utility of \cref{cor:ambiguity_l1_bound,cor:ambiguity_l8_bound} is twofold: 1) we will use it to decide whether $L_1$ or $L_\infty$ ambiguity sets are more appropriate for a given problem, and 2) we will use them to improve solution quality by optimizing the weights involved. 

\subsection{Optimizing Norm Weights}

In this section, we introduce methods for optimizing weights that provide the tightest possible guarantees. To simplify the exposition, we first assume weighted $L_1$ ambiguity sets and then describe a similar approach for the $L_\infty$ ambiguity sets.

Recall that the objective is to choose an ambiguity set that leads to a solution with the maximal objective value that simultaneously provides the required performance guarantees:
\begin{equation} \label{eq:weight_objective}
\begin{mprog}
\maximize{\*w \in \real^S_{++}} \min_{\*p \in \Delta^S} \left\{ {\*p}\tr \*z ~:~ \norm{ \*p - \bar{\*p} }_{1,\*w} \leq \psi \right\} 
\stc \sum_{i=1}^S w_i^2 = 1 
\end{mprog}~.
\end{equation}
The purpose of the constraint $\sum_{i=1}^S w_i^2 = 1$ is to normalize $\*w$ to preserve the desired robustness guarantees with \emph{the same} $\psi$. Notice that scaling both $\*w$ and $\psi$ simultaneously does not change the ambiguity set. The justification for this particular choice of the regularization constraint is given formally in \cref{sec:size}. To summarize, this constraint makes it possible to treat $\psi$ as being independent of $\*w$.

Because the optimization problem in \eqref{eq:weight_objective} is intractable (a non-convex optimization problem), we instead maximize a lower bound on the objective established in \cref{cor:ambiguity_l1_bound}:
\begin{equation} \label{eq:weight_objective_ref}
\max_{\*w \in \real^S_{++}} \left\{ \bar{\*p}\tr  \*z - \psi \norm{ \*z - \bar\lambda \*1}_{\infty, \frac{1}{\*w}} ~:~ \sum_{i=1}^S w_i^2 = 1 \right\} ~.
\end{equation}
The value $\bar{\lambda}$ is fixed ahead of time and does not change with a different choice of the weights $\*w$. Omitting terms that are constant with respect to $\*w$ gives the following formulation for the (approximately) optimal choice of weights $\*w$:
\begin{equation} \label{eq:obj_reformulated}
\*w\opt \in \arg\min_{\*w \in \real^S_{++}} \left\{ \norm{ \*z - \bar\lambda \*1}_{\infty, \frac{1}{\*w}} ~:~ \sum_{i=1}^S w_i^2 = 1 \right\} ~. 
\end{equation}

The nonlinear optimization problem in \eqref{eq:obj_reformulated} is convex and can be, surprisingly, solved \emph{analytically}.  To simplify notation, let $b_i = \abs{z_i - \bar{\lambda}}$ for $i=1,\ldots, S$. After introducing an auxiliary variable $t$, the optimization problem becomes:
\begin{equation} \label{eq:compute_weight_analytically_l1}
\min_{\*w > \zeros,t}  \left\{ t ~:~ t \ge b_i/w_i,\, \sum_{i=1}^S w_i^2 = 1 \right\}~.
\end{equation}
The constraints $\*w > \zeros$ cannot be active (because of $1/w_i$) and may be safely ignored. That means the convex optimization problem in \cref{eq:compute_weight_analytically_l1} has a linear objective and  $S+1$ variables ($\*w$'s and $t$) and $S+1$ constraints. All the constraints, therefore, must be active in the optimal solution~\cite{Bertsekas2003}. The optimal $\*w\opt$ thus satisfies:
\begin{equation} \label{eq:analytical_weights_l1}
w_i\opt = \frac{b_i}{\sqrt{\sum_{j=1}^S b_j^2} } ~.
\end{equation}
Since $\sum_i w_i^2 = 1$ implies $\sum_i b_i^2 / t^2 =1$, we can conclude $ t= \sqrt{ \sum_i b_i}$.

Following the same approach for the weighted $L_\infty$ ambiguity set, the equivalent optimization of \eqref{eq:compute_weight_analytically_l1} becomes:
\begin{equation} \label{eq:compute_weight_analytically_l8}
\min_{\*w > \zeros} \; \left\{ \sum_{i=1}^S b_i / w_i ~:~ \sum_{i = 1}^S w_i^2 = 1 \right\}~.
\end{equation}
Again, the non-negativity constraints on $\*w$  can be relaxed. Using the necessary optimality conditions (and a Lagrange multiplier), the optimal weights $\*w$ are:
\begin{equation} \label{eq:analytical_weights_l8}
w_i\opt = \frac{b_i^{1/3}}{\sqrt{\sum_{j=1}^S {b_j^{2/3}}} } ~.
\end{equation}

In this section, we described a new approach for optimizing the shape of ambiguity sets. In the following section, we establish new sampling bounds for these new types of ambiguity sets.

\section{Size of Ambiguity Sets} \label{sec:size}

In this section, we describe new sampling bounds that can be used to construct ambiguity sets with desired sampling guarantees. We describe both frequentist and Bayesian methods.

\begin{algorithm}
	\KwIn{Distribution $\theta$ over $\*p\opt_{s,a}$, confidence level $\delta$, sample count $m$, weights $w$}
	\KwOut{Nominal point $\bar{\*p}_{s,a}$ and $\psi_{s,a}$}
	Sample $X_1, \ldots, X_m \in \Delta^S$ from $\theta$: $X_i \sim \theta $\;
	Nominal point: $\bar{\*p}_{s,a} \gets (1/ m) \sum_{i=1}^m X_i $\;
	Compute distances $d_i \gets \lVert \bar{\*p}_{s,a} - X_i \rVert_{1,w}$ and sort in \emph{increasing} order\;
	$\psi_{s,a} \gets d_{\ceil{(1-\delta)m}}$\;
	\Return{$\bar{\*p}_{s,a}$ and $\psi_{s,a}$}\
	\caption{Weighted Bayesian Credible Interval (WBCI)} \label{alg:bayes}
\end{algorithm}

\paragraph{Bayesian Credible Regions (BCR).}
In Bayesian statistics, credible intervals are comparable to classical confidence intervals. Credible intervals are fixed bounds on the estimator, which itself is a random variable. The Bayesian approach combines the prior domain knowledge with observations to infer current belief in the form of the posterior distribution of the estimator~\cite{bertsekas2002introduction}. 
\citeasnoun{petrik2019beyond} suggest an approach to construct ambiguity regions from credible intervals.   The method starts with sampling from the posterior probability distribution of $P\opt$ given data $\dataset$ to estimate the mean transition probability $\bar{\*p}\sa = \expect_{P\opt} [\*p\opt\sa | \dataset]$. Then  the smallest possible ambiguity set around the mean is obtained by solving the following optimization problem for each state $s$ and action $a$: 
\begin{equation*} \label{eq:optimization_bci}
\psi\sa^B = \min_{\psi\in\Real_{++}} \left\{\psi \ss \P\left[ \norm{\*p\opt_{s,a} - \bar{\*p}_{s,a}} > \psi ~:~ \mathcal{D} \right] < \frac{\delta}{SA} \right\}~.
\end{equation*}
Finally, the Bayesian ambiguity set can be obtained by:
\begin{equation*}
\label{eq:bay_ambset}
\ambset\sa^B = \left\lbrace \*p \in \simplexs : \norm{\*p - \bar{\*p}\sa} \leq \psi\sa^B \right\rbrace~.
\end{equation*}
This construction applies easily to any form of norm used in the construction of ambiguity sets. That is, it is easy to generalize this method for both weighted $L_1$ and weighted $L_\infty$ ambiguity sets that we study in this work. \cref{alg:bayes} summarizes the simple algorithm to construct weighted Bayesian ambiguity sets.

\subsection{Weighted Frequentist Confidence Intervals (WFCI)}
Confidence intervals obtained by Hoeffding's inequality are based on the empirical mean of independent, bounded random variables. In this section, we introduce confidence regions with weighted $L_1$ bound on transition probabilities as an extension to Lemma~(C.1) presented by~\citeasnoun{petrik2019beyond}.

\begin{theorem}\label{thm:weighted_lone}
	Suppose that $\bar{\*p}\sa$ is the empirical estimate of the transition probability obtained from $n\sa$ samples for some $s \in \states$ and $ a \in \actions$. If the weights $\*w \in \real_{++}^S$ are sorted in non-increasing order $w_i \ge w_{i+1}$, then:
	\begin{equation*} \label{hoeff_weighted}	
	\prob \left[ \norm{\bar{\*p}\sa - \*p\opt\sa}_{1,\*w}  \geq \psi\sa  \right] \leq  2
	 \sum_{i = 1}^{S-1} 2^{S - i} \exp  \left(  -  \frac{\psi\sa^2n\sa}{2 w_i^2} \right)~.
	\end{equation*}
	Note that $\bar{\*p}$ is the random variable in the inequality above.
\end{theorem}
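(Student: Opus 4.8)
\emph{Proof idea.} The plan is to follow the proof of the unweighted Weissman-type bound (Lemma~C.1 of \citeasnoun{petrik2019beyond}) while carrying the weights $\*w$ through the subset decomposition of the $L_1$ norm; the hypothesis that $\*w$ is sorted non-increasingly is exactly what turns the resulting combinatorial sum into the clean stated form. Write $\*\delta = \bar{\*p}\sa - \*p\opt\sa$ with components $\delta_i$, so $\one\tr\*\delta = 0$, and set $A = \{\, i \in \states : \delta_i \ge 0 \,\}$. Then
\[
\norm{\*\delta}_{1,\*w} \;=\; \sum_{i \in A} w_i \delta_i \;+\; \sum_{i \notin A} w_i (-\delta_i),
\]
a sum of two \emph{nonnegative} terms. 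The first departure from the unweighted argument appears here: with weights these two terms need not be equal, so instead of factoring out an exact $2$ I would only claim that $\norm{\*\delta}_{1,\*w} \ge \psi\sa$ forces at least one of them to be $\ge \psi\sa/2$. Whenever the event holds we have $\*\delta \ne \zeros$, and since $\one\tr\*\delta = 0$ this makes both $A$ and $\states \setminus A$ nonempty \emph{proper} subsets of $\states$.

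Next I would replace the random set $A$ by a union over subsets and apply Hoeffding. The event is then contained in
\[
\Big\{\, \exists B :\; \textstyle\sum_{i \in B} w_i \delta_i \ge \psi\sa/2 \,\Big\} \;\cup\; \Big\{\, \exists B :\; \textstyle\sum_{i \in B} w_i \delta_i \le -\psi\sa/2 \,\Big\},
\]
with $B$ ranging over the $2^S - 2$ sets satisfying $\emptyset \ne B \subsetneq \states$. For a fixed $B$, writing $X_1,\dots,X_{n\sa}$ for the sampled next states, $\sum_{i \in B} w_i \delta_i = \tfrac{1}{n\sa}\sum_{t=1}^{n\sa}(Y_t - \E Y_t)$ where $Y_t = w_{X_t}\,\mathbf 1\{X_t \in B\}$ are i.i.d.\ and take values in $[0, \max_{i \in B} w_i]$; Hoeffding's inequality then bounds either one-sided deviation by $\exp\!\big( -\psi\sa^2 n\sa / (2\,(\max_{i \in B} w_i)^2) \big)$. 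Because $\*w$ is non-increasing, $\max_{i \in B} w_i = w_{\min B}$, so I group the subsets by $\min B$: there are exactly $2^{S-i}$ of them with $\min B = i$. Summing the contributions, and then using $w_S \le w_1$ to absorb the lone $i = S$ term into the $i = 1$ term, collapses the double sum to $2 \sum_{i=1}^{S-1} 2^{S-i} \exp\!\big( -\psi\sa^2 n\sa/(2 w_i^2) \big)$, which is the claimed inequality.

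The main obstacle is precisely the loss of the factor-of-two symmetry enjoyed by the unweighted proof: one must (i) accept the weaker ``at least one half is large'' split, which is what produces the leading constant $2$, and (ii) control \emph{both} signed deviations $\pm\sum_{i \in B} w_i \delta_i$ with a Hoeffding estimate that uses the correct bounded range $[0, \max_{i\in B} w_i]$ rather than a symmetric interval — identifying this range (hence the correct exponent $w_i^2 = w_{\min B}^2$) for every $B$ is the crux of the argument. The remaining pieces — verifying that $A$ and $\states\setminus A$ are proper and nonempty, counting the subsets by their smallest element, and the final absorption of the $i=S$ term — are routine bookkeeping.
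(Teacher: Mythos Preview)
Your proposal is essentially the paper's own argument: write $\norm{\bar{\*p}\sa-\*p\opt\sa}_{1,\*w}$ as two nonnegative pieces over the complementary sign sets, use the ``one half is at least $\psi\sa/2$'' split for the leading factor $2$, union-bound over proper nonempty $B\subsetneq\states$, apply Hoeffding with range $[0,w_{\min B}]$, and regroup the subsets by $\min B$. The only slip is in the final bookkeeping---among \emph{proper} subsets there are $2^{S-1}-1$ (not $2^{S-1}$) with $\min B=1$, and it is precisely this deficit of one that lets the lone $i=S$ term (coefficient $1$, with $w_S\le w_1$) be absorbed back into $i=1$ to close at the stated $\sum_{i=1}^{S-1}2^{S-i}$ form.
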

\begin{theorem}[weighted $\linf$ error bound] \label{thm:weighted_linfty}
	Suppose that $\bar{\*p}\sa$ is the empirical estimate of the transition probability obtained from $n\sa$ samples for some $s \in \states$ and $ a \in \actions$. Then:
	\begin{equation*}		
	\prob \left[ \norm{\bar{\*p}\sa - \*p^\star\sa}\liw \geq \psi\sa \right] \leq 2 \sum_{i=1}^S \exp \left(-2 \frac{\psi\sa^2 n\sa}{w_{i}^2} \right)~.
	\end{equation*}
\end{theorem}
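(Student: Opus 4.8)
The plan is to exploit the fact that the weighted $L_\infty$ norm decomposes coordinate by coordinate, so that a plain union bound over the $S$ next states, followed by a one-dimensional Hoeffding inequality applied separately to each coordinate, already suffices. In particular, no Weissman-style argument over subsets of the simplex is needed, which is why the combinatorial $2^{S-i}$ factors appearing in \cref{thm:weighted_lone} are absent here and the weights need not even be sorted.

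First I would unfold the norm. By definition $\norm{\bar{\*p}\sa - \*p\opt\sa}\liw = \max_{i=1,\ldots,S} w_i \abs{(\bar{\*p}\sa)_i - (\*p\opt\sa)_i}$, so the event $\bigl\{ \norm{\bar{\*p}\sa - \*p\opt\sa}\liw \ge \psi\sa \bigr\}$ is precisely the union over $i \in \{1,\ldots,S\}$ of the events $\bigl\{ \abs{(\bar{\*p}\sa)_i - (\*p\opt\sa)_i} \ge \psi\sa / w_i \bigr\}$. A union bound then gives
\[
\prob\left[ \norm{\bar{\*p}\sa - \*p\opt\sa}\liw \ge \psi\sa \right] \;\le\; \sum_{i=1}^S \prob\left[ \abs{(\bar{\*p}\sa)_i - (\*p\opt\sa)_i} \ge \frac{\psi\sa}{w_i} \right].
\]

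Next I would bound each summand. Each coordinate $(\bar{\*p}\sa)_i$ is the empirical average of $n\sa$ i.i.d.\ Bernoulli random variables with mean $(\*p\opt\sa)_i$ --- namely the indicators that the $k$-th observed transition from $(s,a)$ lands in state $i$ --- and each such indicator is bounded in $[0,1]$. Two-sided Hoeffding's inequality therefore yields $\prob[ \abs{(\bar{\*p}\sa)_i - (\*p\opt\sa)_i} \ge \epsilon ] \le 2\exp(-2 n\sa \epsilon^2)$; taking $\epsilon = \psi\sa / w_i$ makes the $i$-th term at most $2\exp\bigl(-2\psi\sa^2 n\sa / w_i^2\bigr)$. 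Summing over $i = 1,\ldots,S$ gives exactly the stated bound.

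The main thing to be careful about is not a technical obstacle but the modelling assumption: one must verify that the $n\sa$ observed next states are i.i.d.\ draws from $\*p\opt\sa$, which is the standard reinforcement-learning sampling setup underlying the other frequentist bounds in \cref{sec:size}; given that, the argument is entirely elementary. It is also worth stating explicitly that the leading factor $2$ comes from the two-sided form of Hoeffding's inequality, which one could instead recover via an additional union bound over the two signs of the deviation in each coordinate.
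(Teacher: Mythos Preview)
Your proposal is correct and follows essentially the same route as the paper's proof: decompose the weighted $L_\infty$ norm coordinate-wise, apply a union bound over the $S$ coordinates (and implicitly the two signs of the deviation), and invoke Hoeffding's inequality on each Bernoulli-average coordinate. The only cosmetic difference is that the paper first rewrites the norm as $\max_{i \in \states}\{\*1_i W(\bar{\*p}\sa - \*p^\star\sa),\, -\*1_i W(\bar{\*p}\sa - \*p^\star\sa)\}$ with $W=\diag(\*w)$ before performing the same union bound and Hoeffding step.
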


\Cref{thm:weighted_lone,thm:weighted_linfty} establish the error bounds that can be used to construct ambiguity sets of appropriate size. Unlike with the standard error bound, $\psi_{s,a}$ cannot be determined readily from the bounds analytically. However, since the confidence level function is monotonically increasing, $\psi_{s,a}$ can be determined easily using a bisection method. 

\section{Empirical Evaluation}\label{sec:experiments}
In this section, we empirically evaluate the advantage of using weighted ambiguity sets in Bayesian and frequentist settings. We evaluate $L_1$ and $L_\infty$-bounded ambiguity sets, both with weights and without weights. We compare BCI with Hoeffding and Bernstein sets. We start by assuming a true underlying model that produces the simulated datasets containing $100$ samples for each state and action. The frequentist methods use these datasets to construct an ambiguity set. Bayesian methods combine the data with a prior to computing a posterior distribution and then draw $10000$ samples from the posterior distribution to construct a Bayesian ambiguity set. We use an uninformative uniform prior over the reachable next states for all the experiments unless otherwise specified. This prior is somewhat informative in the sense that it contains the knowledge of non-zero transitions implied by the datasets. The performance of the methods is evaluated by the guaranteed robust returns computed for a range of different confidence levels. We strengthen the weighted $L_1$ error
bound by a factor of two to match with the unweighted one.

\paragraph{Single Bellman Update}
In this experiment, we set up a very trivial problem to meticulously examine our proposed method. We consider a transition from a single state $s_0$ and an action $a_0$ leading to $5$ terminal states $s_1,\ldots,s_5$. The value functions are assumed to be fixed and known. The prior is uniform Dirichlet over the next states. Plots in \cref{fig:single_update} and \cref{fig:single_update_sparse} show a comparison of average guaranteed returns for $100$ independent trials. The weighted methods outperform unweighted methods in all instances. Also, the weighted BCI methods are significantly better than other frequentist methods. It is also apparent from the plot that the $\linf$-constrained method can outperform in case of sparse value functions as shown in \cref{fig:single_update_sparse}.

\begin{figure}
	\begin{minipage}{0.45\linewidth}
		\centering
		\includegraphics[width=\textwidth]{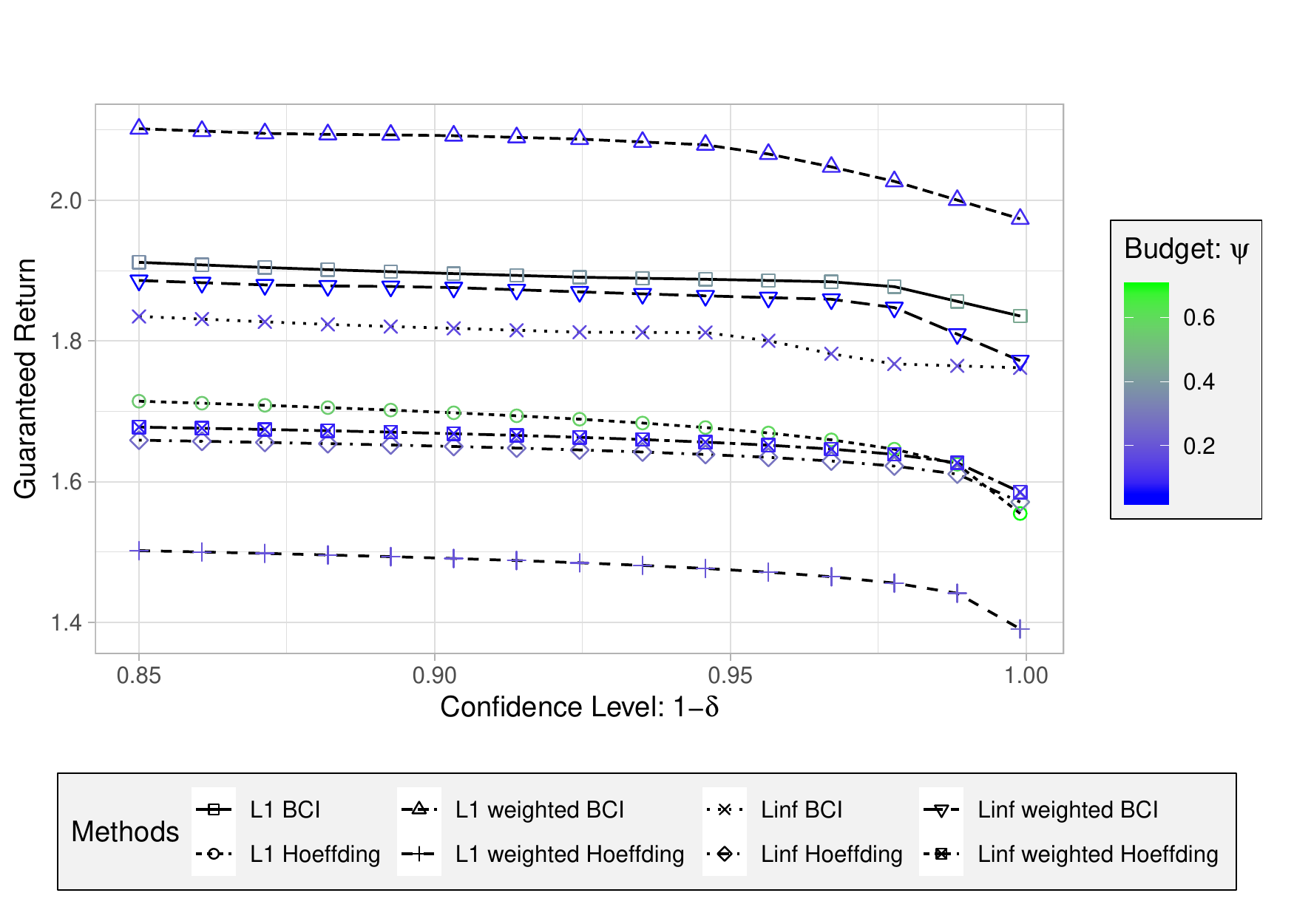}
\caption{Single Bellman Update: the guaranteed return for a monotonic value function $v = [1, 2, 3, 4, 5]$.}
\label{fig:single_update}
	\end{minipage}
	\hspace{0.5in}
	\begin{minipage}{0.45\linewidth}
		\centering
		\includegraphics[width=\textwidth]{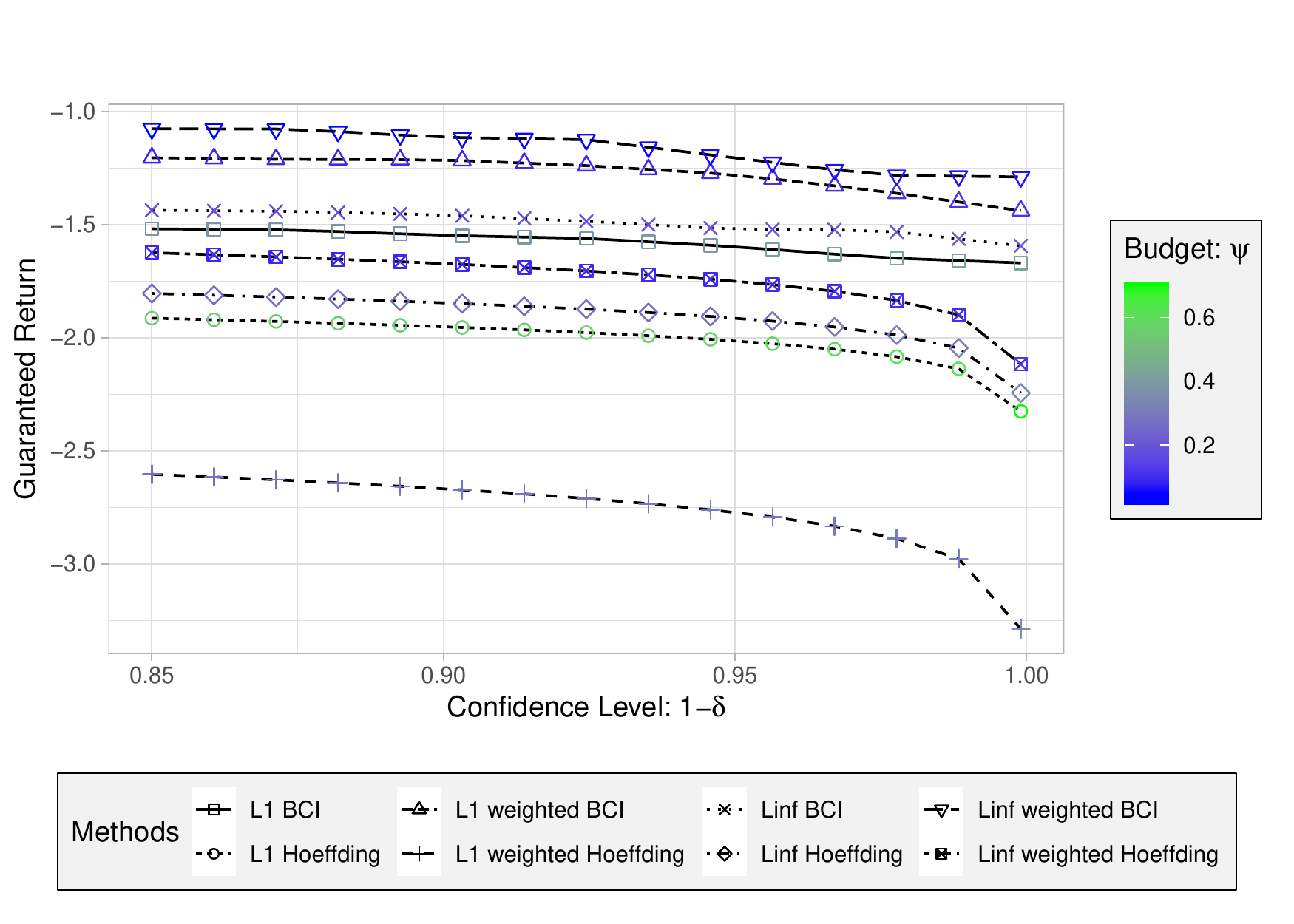}
		\caption{Single Bellman Update: the guaranteed return for a sparse value function $v = [0, 0, 0, 0, -5]$.}
		\label{fig:single_update_sparse}
	\end{minipage}
\end{figure}

\begin{table*}
	\centering
	\begin{tabularx}{\textwidth}{r *5{>{\raggedleft\arraybackslash}X}}
		\toprule
		& \multicolumn{1}{c}{Confidence $\rightarrow$} & \multicolumn{2}{c}{0.5} & \multicolumn{2}{c}{0.95} \\
		\cmidrule(lr){3-4} \cmidrule(lr){5-6} 
		& \multicolumn{1}{c}{Methods $\downarrow$} & \multicolumn{1}{c}{Unweighted} & \multicolumn{1}{c}{Weighted} & 
		\multicolumn{1}{c}{Unweighted} & 
		\multicolumn{1}{c}{Weighted} \\
		\midrule
		\multirow{ 2}{*}{Bayesian} 
		& $L_1$ BCI & 8198.32 & \textbf{30014.58} & 2278.14 & \textbf{21591.15} \\
		& $L_{\infty}$ BCI & 7999.96 & 26653.51 & 2210.42  & 17943.25 \\ \midrule
		\multirow{ 4}{*}{\centering Frequentist} 
		& $L_1$ Hoeffding & 497.66 & 1392.49 & 490.18 & 655.29 \\
		&$L_1$ Bernstein & 490.18 & 721.07 & 490.18 & 490.18 \\
		&$L_{\infty}$Hoeffding & 805.53 & \textbf{12513.47} & 490.18  & \textbf{7155.85} \\
		\bottomrule
	\end{tabularx}
	\caption{RiverSwim experiment. Guaranteed robust return for different confidence levels.} \label{tab:riverswim}
\end{table*}

\begin{table*}
	\centering
	\begin{tabularx}{\textwidth}{r *5{>{\raggedleft\arraybackslash}X}}
		\toprule
		& \multicolumn{1}{c}{Confidence $\rightarrow$} & \multicolumn{2}{c}{0.5} & \multicolumn{2}{c}{0.95} \\
		\cmidrule(lr){3-4} \cmidrule(lr){5-6} 
		& \multicolumn{1}{c}{Methods $\downarrow$} & \multicolumn{1}{c}{Unweighted} & \multicolumn{1}{c}{Weighted} & 
		\multicolumn{1}{c}{Unweighted} & 
		\multicolumn{1}{c}{Weighted} \\
		\midrule
		\multirow{ 2}{*}{Bayesian} 
		& $L_1$ BCI & -99973 & \textbf{-5675} & -107348 & \textbf{-7552} \\
		& $L_{\infty}$ BCI & -132111 & -32794 & -136121  & -46041 \\ 
		\midrule
		\multirow{ 4}{*}{\centering Frequentist}
		& $L_1$ Hoeffding & -106966 & -84615 & -110656 & -89607 \\
		&$L_1$ Bernstein & -131594 & -123646 & -132834 & -125979 \\
		&$L_{\infty}$Hoeffding & -132226 & \textbf{-28267} & -133427 & \textbf{-42236} \\
		\bottomrule
	\end{tabularx}
	\caption{Population experiment. Guaranteed robust return for different confidence levels.} \label{tab:population}
\end{table*}

\begin{table*}
	\centering
	\begin{tabularx}{\textwidth}{r *5{>{\raggedleft\arraybackslash}X}}
		\toprule
		& \multicolumn{1}{c}{Confidence $\rightarrow$} & \multicolumn{2}{c}{0.5} & \multicolumn{2}{c}{0.95} \\
		\cmidrule(lr){3-4} \cmidrule(lr){5-6} 
		& \multicolumn{1}{c}{Methods $\downarrow$} & \multicolumn{1}{c}{Unweighted} & \multicolumn{1}{c}{Weighted} & 
		\multicolumn{1}{c}{Unweighted} & 
		\multicolumn{1}{c}{Weighted} \\
		\midrule
		\multirow{ 2}{*}{Bayesian} 
		& $L_1$ BCI & 314.31 & \textbf{433.02} & 294.99 & \textbf{418.68} \\
		& $L_{\infty}$ BCI & 180.96 & 272.34 & 158.33  & 250.78 \\ 
		\midrule
		\multirow{ 4}{*}{\centering Frequentist}
		& $L_1$ Hoeffding & 195.11 & 240.74 & 184.02 & 233.36 \\
		&$L_1$ Bernstein & 124.30 & 196.95 & 109.71 & 185.95 \\
		&$L_{\infty}$Hoeffding & 138.57 & \textbf{252.96} & 124.09 & \textbf{242.16} \\
		\bottomrule
	\end{tabularx}
	\caption{Inventory experiment. Guaranteed robust return for different confidence levels.} \label{tab:inventory}
\end{table*}

\paragraph{RiverSwim}
We consider the standard RiverSwim~\cite{strehl2008analysis} domain for evaluating our methods. The process follows by sampling synthetic datasets from the true model and then computing the guaranteed robust returns for different methods. We use a uniform Dirichlet distribution over the next states as prior. \cref{tab:riverswim} summarizes the results. All the weighted methods dominate unweighted methods, and the weighted $L_1$ BCI method provides the highest guaranteed return.

\paragraph{Population Growth Model}
We also apply our method in an exponential population growth model~\citep{Kery2012}. Our model constitutes a simple state-space with exponential dynamics. At each time step, the land manager has to decide whether to apply a control measure to reduce the growth rate of the species. We refer to ~\citeasnoun{Tirinzoni2018} for more details of the model. The results are summarized in \cref{tab:population}. Returns for all the methods are negative, which implies a high management cost. Policies computed with frequentist and unweighted methods yield a very high cost. Bayesian and weighted methods significantly outperform other methods.

\paragraph{Inventory Management Problem}
Next, we take the classic inventory management problem~\citep{Zipkin200}. The inventory level is discrete and limited by the number of states $S$. The purchase cost, sale price, and holding cost are $2.49, 3.99$, and $0.03$ respectively. The demand is sampled from a normal distribution with a mean $S/4$ and a standard deviation of $S/6$. The initial state is $0$ (empty stock). \cref{tab:inventory} summarizes the computed guaranteed returns of different methods at $0.5$ and $0.95$ confidence levels. The guaranteed returns computed with Bayesian and weighted methods are significantly higher than other methods in this problem domain.

\begin{table*}
	\centering
	\begin{tabularx}{\textwidth}{r *5{>{\raggedleft\arraybackslash}X}}
		\toprule
		& \multicolumn{1}{c}{Confidence $\rightarrow$} & \multicolumn{2}{c}{0.5} & \multicolumn{2}{c}{0.95} \\
		\cmidrule(lr){3-4} \cmidrule(lr){5-6} 
		& \multicolumn{1}{c}{Methods $\downarrow$} & \multicolumn{1}{c}{Unweighted} & \multicolumn{1}{c}{Weighted} & 
		\multicolumn{1}{c}{Unweighted} & 
		\multicolumn{1}{c}{Weighted} \\
		\midrule
		\multirow{ 2}{*}{Bayesian} 
		& $L_1$ BCI & 24.17 & \textbf{26.45} & 23.87 & \textbf{26.41} \\
		& $L_{\infty}$ BCI & 23.94 & 26.35 & 23.63  & 26.24 \\ 
		\midrule
		\multirow{ 4}{*}{\centering Frequentist}
		& $L_1$ Hoeffding & 4.02 & 24.71 & 3.53 & 24.70 \\
		&$L_1$ Bernstein & 1.82 & 24.22 & 1.82 & 24.21 \\
		&$L_{\infty}$Hoeffding & 23.02 & \textbf{26.07} & 22.91 & \textbf{26.00} \\
		\bottomrule
	\end{tabularx}
	\caption{Cart-pole experiment. Guaranteed robust return for different confidence levels.} \label{tab:cartpole}
\end{table*}


\paragraph{Cart-Pole}
We evaluate our method on cart-pole, a standard RL benchmark problem \cite{sutton2018reinforcement,openaigym}. We collect samples of $100$ episodes from the true dynamics. We fit a linear model with that dataset to generate synthetic samples and aggregate nearby states on a resolution of 200 using K-nearest neighbor strategy. The results are summarized in \cref{tab:cartpole}. Again, in this case, all the Bayesian and weighted methods outperform other methods.

\section{Conclusion} \label{sec:conclusion}
In this paper, we proposed a new approach for optimizing the shape of the ambiguity sets that goes beyond the conventional $L_1$-constrained ambiguity sets studied in the literature. We showed that the optimal shape is problem dependent and is driven by the characteristics of the value function. We derived new sampling guarantees, and our experimental results show that the problem-dependent shapes of the ambiguity set can significantly improve solution quality.

\subsection{Acknowledgments}

This work was supported by the National Science Foundation under Grant Nos. IIS-1717368 and IIS-1815275.

\bibliography{reazul.bib}
\bibliographystyle{aaai}

\clearpage
\bigskip

\eat{
\newpage

\appendix
\onecolumn
\section{Technical Proofs} \label{technical_proofs}

\subsection{Dual of Weighted $L_1$ Norm}
Assume given positive weights $\*w \in \real^S$ for the weighted $L_1$ optimization problem:
\begin{equation}
\begin{aligned}
\max_\*x \quad & \*x \tr \*z \\
\text{s.t. } \quad 
& \norm{\*x}_{1,\*w} \le 1\\
\end{aligned} \label{eq:l1}
\end{equation}
\begin{lemma} \label{th:l1_l8} The dual norm of the weighted $L_1$ optimization problem defined in \eqref{eq:l1} is  $\lVert \*z \rVert_{\infty,\frac{1}{\*w}}$.
\end{lemma}
\begin{proof}
	\begin{equation*}
	\begin{aligned}
	\*x\tr \*z &= \sum_{i=1}^{n} x_iz_i \le \sum_{i=1}^{n} |x_iz_i|\\
	&\overset{(a)}{\leq} \sum_{i=1}^{n} |x_i||z_i|\\
	&= \sum_{i=1}^{n} w_i|x_i| \frac{1}{w_i}|z_i|, \\
	&\le \max_{i=1}^{n}\bigg\{ \frac{1}{w_i} |z_i|\bigg\} \cdot \sum_{i=1}^{n} w_i|x_i|\\
	&= \max_{i=1}^{n}\bigg\{ \frac{1}{w_i} |z_i|\bigg\} \cdot \lVert \*x \rVert_{1,\*w}\\
	&\overset{(b)}{\leq} \max_{i=1}^{n}\bigg\{ \frac{1}{w_i} |z_i|\bigg\}\\
	&= \lVert \*z \rVert_{\infty,\frac{1}{\*w}}
	\end{aligned} \label{eq:l1_dual}
	\end{equation*}
	
	Here, (a) follows from  Cauchy-Schwarz inequality and (b) follows from the constraint $\lVert \*x \rVert_{1,\*w} \le 1$ of \eqref{eq:l1}.
\end{proof}


\subsection{Proof of \Cref{thm:choose_weights}}\label{proof:choose_weights}

\begin{proof}

	The inner optimization objective function for RMDPs for $L_1$-constrained  ambiguity sets are defined as follows:
	\[
	\min_{\*p \in \simplexs} \{ \*p\tr \*z : \norm{ \*p - \bar{\*p} }_1 \leq \psi \}~.
	\]
	
	Let $\*q = \*p - \bar{\*p}$. We can reformulate the optimization problem using the new variable $\*q$:
	
	\begin{equation*}
	\begin{aligned}
	\min_\*q \quad & ( \*q + \bar{\*p} )\tr  \*z \\
	\text{s.t. } \quad 
	& \norm{\*q}_{1} \le \psi\\
	& \*1\tr ( \*q + \bar{\*p}) = 1 \implies  \*1\tr  \*q = 0  \\
	& \*q \geq -\bar{\*p}~. 
	\end{aligned} 
	\end{equation*}
	If $\psi$ is sufficiently small and $\bar{\*p}$ is sufficiently large, we can relax the problem by dropping the  $ \*q \geq -\bar{\*p}$ constraint. Since $\bar{\*p}\tr  \*z $ is a fixed number we continue with:
	
	\begin{equation*}
	\begin{aligned}
	\bar{\*p}\tr  \*z  + \min_\*q \quad &  \*q  \tr  \*z \\
	\text{s.t. } \quad 
	& \norm{\*q}_{1} \le \psi\\
	& \*1\tr  \*q = 0  \\ 
	\end{aligned} 
	\end{equation*}
	
	We then change the minimization form to maximization:
	
	\begin{equation*}
	\begin{aligned}
	\bar{\*p}\tr  \*z - \max_\*q \quad & -  \*q  \tr  \*z \\
	\text{s.t. } \quad 
	& \norm{\*q}_{1} \le \psi\\
	& \*1\tr  \*q = 0  \\ 
	\end{aligned} 
	\end{equation*}

	By applying the method of Lagrange multipliers , we obtain:
	
	\begin{equation*}
	\begin{aligned}
	\min_{\lambda} \max_\*q \quad &  - \*q  \tr  \*z  -    \lambda(\*q \tr\*1)  =   \*q \tr (- \*z  -  \lambda \*1)  \\ 
	\text{s.t. } \quad 
	& \norm{\*q}_{1} \le \psi\\
	\end{aligned} 
	\end{equation*}
	
	Let  $\*x =\frac{\*q}{\psi}$, so  we get: 
	
	\begin{equation*}
	\begin{aligned}
	\quad \min_{\lambda}  \max_\*x &~~   \psi \cdot \*x  \tr (  - \*z  -   \lambda \*1)  \\ 
	\text{s.t. } \quad 
	& \norm{\*x}_{1} \le 1\\
	\end{aligned} 
	\end{equation*}
	Given the definition of the \emph{dual norm},
	\[
	\norm{\*z}_{\star}=\sup\{\*z^{\intercal } \*x\;|\;\|\*x\|\leq 1\}~,
	\]
	we have: 
	\begin{equation*}
	\begin{aligned}
	\bar{\*p}\tr  \*z -   \min_\lambda  & ~~  \psi \norm{ \*z + \lambda \*1}_{\infty}  \\ 
	\end{aligned} 
	\end{equation*}
	By following Lemma \ref{th:l1_l8} we can derive similar conclusion using weighted norm in which the lower bound is:
	\begin{equation*}
	\begin{aligned}
	\bar{\*p}\tr  \*z -   \min_\lambda  & ~~  \psi \norm{ \*z + \lambda \*1}_{\infty, \frac{1}{\*w}}  \\ 
	\end{aligned} 
	\end{equation*}
	
	A reasonable estimation for $\lambda$, based on quantile regression, is the median of $\*z$, which is denoted by $\bar{\lambda}$. This estimate is more robust against outliers in the response measurements.
	
	\[
	\min_\lambda  \norm{ \*z + \lambda \*1}_{\infty, \frac{1}{\*w}} \approx \norm{ \*z - \bar{\lambda} \*1}_{\infty, \frac{1}{\*w}} ~,
	\]
	where $\bar{\lambda} = \Med(\*z)$.

	To achieve the tightest lower bound we choose the weights that maximize the following:
	
	\begin{equation}
	\begin{aligned} \label{eq:compute_weights}
	\max_\*w  \quad &  \bar{\*p}\tr  \*z - \psi \norm{ \*z - \bar{\lambda} \*1}_{\infty, \frac{1}{\*w}}   \\ 
	\text{s.t. } \quad 
	& \*w \geq 0~\\
	& \*1\tr \*w = 1 ~. 
	\end{aligned} 
	\end{equation}
	We reduce the degree of freedom by constraining the weights to sum to one.
	\bahram{The term $\psi$ depends on the choice of $\*w$, but we do not take it to the account.} 
\end{proof}

\subsection{Analytical Solution for \Cref{eq:compute_weights}}

\subsection{Proof of \Cref{thm:weighted_lone}}\label{proof:error_bounds}
In this section, we describe a proof of a bound on the $L\lw$  distance between the estimated transition probabilities $\bar{\*p}$ and the true one $\*p^\star$ over each state $s \in \states = \{ 1, \ldots, S \}$ and action $a \in \actions = \{ 1 , \ldots , A \}$. To guarantee that the solution is a lower bound on the optimal value with probability at least $ 1 - \delta$, we need to choose $\psi \sa$ such that

\begin{equation}
\label{eq:guarantee}
\prob \left[ \max_{s \in \states , a \in \actions} \norm{\bar{\*p}\sa - \*p^\star\sa }\lw \geq \psi\sa \right] \leq \delta ~,
\end{equation}

holds for the desired confidence level $1 - \delta$. When this inequality is satisfied, then the true transition probabilities are included in the ambiguity set for all states and actions.

Using the \emph{union bound}, the sufficient condition for (\ref{eq:guarantee}) to hold is:

\begin{equation}
\label{eq:condition1}
\sum_{s \in \states , a \in \actions} \prob \left[  \norm{\bar{\*p}\sa - \*p^\star\sa }_1 \geq \psi\sa \right] \leq \delta
\end{equation}

The following lemma bounds the term inside the sum in (\ref{eq:condition1}). Recall that $n\sa$ denotes the number of samples that originates with state $s$ and action $a$. 

\begin{lemma}
	($L_1$ Error bound). For a given $s \in \states$ and $a \in \actions$ we have:
	\begin{equation*}
	\label{eq:lemma1_er}
	\prob \left[  \norm{\bar{\*p}\sa - \*p^\star\sa }_1 \geq \psi\sa \right] \leq (2^{\states} - 2) \exp \left( - \frac{\psi^2\sa n\sa}{2} \right) ~.
	\end{equation*}

	And, equivalently, in terms of $\delta$:
	
	\begin{equation*}
	\label{eq:lemma1b}
	\prob \left[  \norm{\bar{\*p}\sa - \*p^\star\sa }_1 \geq \sqrt{\frac{2}{n\sa} \log \frac{2^{\states}-2}{\delta}} \right] \leq \delta ~.
	\end{equation*}
\end{lemma}

\begin{proof}
	First, we will express the $L_1$ distance between two distributions $\bar{\*p}$ and $\*p^\star$ in terms of an optimization problem. Let $\qeu \in 2^\states$ be the indicator vector for some subset $\qeu \subset \states$.
	\begin{align*}
	\norm{\bar{\*p}\sa - \*p^\star\sa}_1 &= \max_\*z \left\lbrace  \*z\tr (\bar{\*p}\sa - \*p^\star\sa) : \norm{\*z}_\infty \leq 1 \right\rbrace    \\ 
	&= \max_{\qeu \in 2^\states} \left\lbrace \*1_\qeu \tr (\bar{\*p}\sa - \*p^\star\sa) - ( \*1 - \*1_\qeu)\tr (\bar{\*p}\sa - \*p^\star\sa) : 
	0 < |\qeu| < m \right\rbrace 
	\\ 
	&\stackrel{(a)}{=} 2 \max_{\qeu \in 2^\states} \left\lbrace \*1_\qeu \tr (\bar{\*p}\sa - \*p^\star\sa)  : 0 < |\qeu| < m \right\rbrace ~.
	\end{align*}
	Here $(a)$ holds because $\*1\tr (\bar{\*p}\sa - \*p^\star\sa) = 0$.
	
	Using the expression above, we can bound the probability in the lemma as follows:
	\begin{align*}
	\prob \left[  \norm{\bar{\*p}\sa - \*p^\star\sa }_1 \geq \psi\sa \right] 
	&= \prob \left[ 2 \max_{\*1_\qeu \in 2^{\states}} \left\lbrace \*1_{\qeu}\tr (\bar{\*p}\sa - \*p^\star\sa) :  0 < |\qeu| < m \right\rbrace \geq \psi \right] \\
	& \stackrel{(a)}{\leq} (|\qeu| - 2) \max_{\*1_\qeu \in 2^{\states}} \left\lbrace \prob \left[ \*1_{\qeu} \tr (\bar{\*p}\sa - \*p^\star\sa) \geq \frac{\psi}{2} \right] : 0 < |\qeu| < m \right\rbrace \\
	& \stackrel{(b)}{\leq}  (|\qeu| - 2) \exp \left( - 
	\frac{\psi^2 n}{2}\right) = (2 ^ {\states} -2 ) \exp \left( - \frac{\psi^2 n}{2} \right)
	\end{align*}
	$(a)$ follows from union bound and $(b)$ follows from the Hoeffding's inequality since $\*1_{\qeu}\tr \bar{\*p} \in [0,1]$ for any $\qeu$ and its mean is $\*1_{\qeu}\tr \*p\opt$. 
\end{proof}

\eat{
	
	\begin{lemma}
		(Weighted $L_1$ Error bound)	
		Suppose that $\bar{\*p}\sa$ is the empirical estimate of the transition probability obtained from $n\sa$ samples for each $s \in \states$ and $ a \in \actions$. Then:

		\begin{fleqn}
			\begin{equation*}
			\begin{aligned}
			& \prob \left[ \norm{\bar{\*p}\sa - \*p^\star\sa}_{1,\*w}  \geq \psi\sa  \right] \leq  2 \sum_{i = 1}^S 2^{S - i} \exp  \left(  -  \frac{\psi\sa^2n\sa}{2 w_i^2} \right)
			\end{aligned} 
			\end{equation*}
		\end{fleqn}
		where $\*w = \{ w_1, \ldots , w_n\}$ is the vector of weights over $L_1$ norm, and sorted in non-increasing order.
	\end{lemma}
	arg1}

\begin{proof}
	Let $\*q\sa = \bar{\*p}\sa - \*p^\star\sa$. To shorten notation in the proof, we omit the $s, a$ indexes when there is no ambiguity. We assume that all weights are non-negative. First, we will express the $L_1$ norm of $\*q$ in terms of an optimization problem. It worth noting that $\*1\tr \*q =0$.  Let $\*1_{\qeu_1}, \*1_{\qeu_2} \in \real^\states$ be the indicator vectors for some subsets $\qeu_1,\qeu_2 \subset \states$ where $\qeu_2 = \states \setminus \qeu_1$. According to \cref{th:l1_l8} we have:
	\begin{align*}
		\norm{\*q}_{1,w} &= \max_\*z \left\lbrace  \*z\tr \*q : \norm{\*z}_{\infty,\frac{1}{w}} \leq 1 \right\rbrace    \\ 
		&= \max_{\qeu_1,\qeu_2 \in 2^\states} \left\lbrace \*1_{\qeu_1} \tr W \*q +  \*1_{\qeu_2}\tr W (-\*q)  :  \qeu_2 = \states \setminus \qeu_1 \right\rbrace 
	\end{align*}
	Here weights are on the diagonal entries of $W$. Using the expression above, we can bound the probability as follows:
	\begin{align*}
	\prob \left[ \max_{\qeu_1,\qeu_2 \in 2^\states} \left\lbrace \*1_{\qeu_1}\tr W \*q + \*1_{\qeu_2}\tr W (-\*q) \right\rbrace  \geq \psi \right]  
	& \stackrel{(a)}{\leq} \prob \left[ \max_{{\qeu_1} \in 2^\states} \left\lbrace \*1_{\qeu_1}\tr W \*q  \right\rbrace  \geq \frac{\psi}{2} \right] + \prob \left[ \max_{{\qeu_2} \in 2^\states} \left\lbrace \*1_{\qeu_2}\tr W (-\*q)  \right\rbrace  \geq \frac{\psi}{2} \right]  \\
	& \leq \sum_{{\qeu_1} \in 2^{\states}} \left\lbrace \prob \left[ \*1_{\qeu_1} \tr W \*q \right] \geq \frac{\psi}{2}   \right\rbrace + \sum_{{\qeu_2} \in 2^{\states}} \left\lbrace \prob \left[ \*1_{\qeu_2} \tr W (-\*q) \right] \geq \frac{\psi}{2}   \right\rbrace\\ 
	& = \sum_{{\qeu_1} \in 2^{\states}} \left\lbrace \prob \left[ \*1_{\qeu_1} \tr W (\bar{\*p} - \*p^\star) \right] \geq \frac{\psi}{2}   \right\rbrace + \sum_{{\qeu_2} \in 2^{\states}} \left\lbrace \prob \left[ \*1_{\qeu_2} \tr W (- \bar{\*p} + \*p^\star) \right] \geq \frac{\psi}{2}   \right\rbrace   \\
	& \stackrel{(b)}{\leq} \sum_{{\qeu_1} \in 2^{\states}} \exp \left( - \frac{\psi^2 n }{2 \norm{\*1_{\qeu_1} \tr W}_\infty^2}\right) + \sum_{{\qeu_2} \in 2^{\states}} \exp \left( - 
	\frac{\psi^2 n }{2 \norm{\*1_{\qeu_2} \tr W}_\infty^2}\right)   \\
	& \stackrel{(c)}{=} \bahram{2}  \sum_{i = 1}^{S -1} 2^{S - i} \exp  \left( - \frac{\psi^2n}{2 w_i^2} \right)
	\end{align*}	
	\bahram{Find out how to justify the absence of 2 here!? }	$(a)$ follows from union bound, and $(b)$ follows from Hoeffding's inequality. $(c)$ follows by $\qeu_1 ^c = \qeu _2$, and sorting weights $\*w = \{w_1, \ldots, w_n \}$ in non-increasing order. It 
	
	When using Bernstein's inequality, the proof continues from section $(b)$ as follows.
	\begin{align*}
	&\stackrel{(b)}{\leq} \sum_{{\qeu_1} \in 2^{\states}} \exp \left( - \frac{3\psi^2 n}{24\sigma^2+ 4 c \psi}\right) + \sum_{{\qeu_2} \in 2^{\states}} \exp \left( - \frac{3\psi^2 n}{24\sigma^2+ 4 c \psi}\right)   \\
	& \stackrel{(c)}{\leq} \sum_{{\qeu_1} \in 2^{\states}} \exp \left( - \frac{3\psi^2 n}{6\norm{\*1_{\qeu_1} \tr W}_\infty^2+4\psi\norm{\*1_{\qeu_1} \tr W}_\infty}\right) + \sum_{{\qeu_2} \in 2^{\states}} \exp \left( - \frac{3\psi^2 n}{6\norm{\*1_{\qeu_2} \tr W}_\infty^2+4\psi\norm{\*1_{\qeu_2} \tr W}_\infty}\right)   \\
	& \stackrel{(d)}{=}    \sum_{i = 1}^{S - 1} 2^{S - i} \exp  \left( - \frac{3\psi^2n}{6 w_i^2 + 4\psi w_i} \right)		
	\end{align*}

	Here $(b)$ follows from Bernstein's inequality where $\sigma^2 = \frac{1}{S}\sum_i^S Var(p_{i})$,  and $c$ is the bound on the random variable with $\prob [\abs{p_i} \leq c] = 1$. In the weighted case, with conservative estimate of variance $\sigma^2 = \norm{\*1_{\qeu_1} \tr W}_\infty^2 /4$, and $c = \norm{\*1_{\qeu_1} \tr W}_\infty$,  because the random variables are drawn from \emph{Bernoulli} distribution with the maximum possible variance of $1/4$. $(d)$ follows by sorting weights $\*w = \{w_1, \ldots, w_n \}$ in non-increasing order.	
	
\end{proof}

\begin{lemma}
	($\linf$ error bound). For a given $s \in \states$ and $a \in \actions$, we have:  
	\begin{align*}
		& \prob \left[ \norm{\bar{\*p}\sa - \*p^\star\sa}_\infty \geq \psi\sa \right] \leq 2 S \exp (-2 \psi\sa^2 n\sa)~.
	\end{align*}
\end{lemma}

And, equivalently, in term of $\delta$:
	\begin{equation*}
	\begin{aligned}
	&\prob \left[  \norm{\bar{\*p}\sa - \*p^\star\sa}_\infty \geq \sqrt{\frac{1}{2 n\sa} \log \frac{2{S}}{\delta}} \right] \leq \delta ~.
	\end{aligned} 
	\end{equation*}

\begin{proof}
	First, we will express the $\linf$ distance between two distribution $\bar{\*p}$ and $\*p^\star$ in terms of an optimization problem. Let $\*1_i \in \real^\states$ be the indicator vector for an index $i \in \states$. 
	\begin{align*}
	\norm{\bar{\*p}\sa - \*p^\star\sa }_\infty  &= \max_{\*z} \left\lbrace \*z \tr (\bar{\*p}\sa - \*p^\star\sa) : \norm{\*z} _1 \leq 1 \right\rbrace \\
	&= \max_{i \in \states} \left\lbrace \*1_i (\bar{\*p}\sa - \*p^\star\sa), - \*1_i (\bar{\*p}\sa - \*p^\star\sa) \right\rbrace ~.
	\end{align*}
	Using the expression above, we can bound the probability in the lemma as follows:			
	\begin{align*}
	\prob  \left[ \norm{\bar{\*p}\sa - \*p^\star\sa}_\infty \geq \psi  \right]  =  
	& \prob \left[ \max_{i \in \states } \left\lbrace \*1_i (\bar{\*p}\sa - \*p^\star\sa), - \*1_i (\bar{\*p}\sa - \*p^\star\sa) \right\rbrace \geq \psi\sa \right] \\
	& \stackrel{(a)}{\leq}  S \max_{i \in \states} \prob \left[ \*1_i (\bar{\*p}\sa - \*p^\star\sa) \geq \psi\sa \right] + S \max_{i \in \states} \prob \left[- \*1_i (\bar{\*p}\sa - \*p^\star\sa) \geq \psi\sa \right] \\
	& \stackrel{(b)}{\leq} 2 S \exp(-2\psi\sa^2 n)
	\end{align*}	
	$(a)$ follows from union bound and $(b)$ follows from the Hoefding's inequality since $\*1 \tr_i \bar{\*p} \in [0,1]$ for any $i \in \states$ and its mean is $\*1_i \tr \*p\opt$. 
\end{proof}

\subsection{Proof of \cref{thm:weighted_linfty}}

\begin{proof}
	First, we will express the weighted $\linf$ distance between two distribution $\bar{\*p}$ and $\*p^\star$ in terms of an optimization problem. Let $\*1_i \in \real^\states$ be the indicator vector for an index $i \in \states$. 
	\begin{align*}
		\norm{\bar{\*p}\sa - \*p^\star\sa }\liw  &= \max_{\*z} \left\lbrace \*z \tr W(\bar{\*p}\sa - \*p^\star\sa) : \norm{\*z} _1 \leq 1 \right\rbrace \\
		&= \max_{i \in \states} \Bigl\lbrace \*1_i W(\bar{\*p}\sa - \*p^\star\sa), - \*1_i W(\bar{\*p}\sa - \*p^\star\sa) \Bigr\rbrace ~.
	\end{align*}
	Here weights are on the diagonal entries of $W$. Using the expression above, we can bound the probability in the lemma as follows:	
	\begin{align*}
		\prob  \left[ \norm{\bar{\*p}\sa - \*p^\star\sa}\liw \geq \psi  \right]  &=  
		\prob \left[ \max_{i \in \states } \left\lbrace \*1_i W(\bar{\*p}\sa - \*p^\star\sa), - \*1_i W(\bar{\*p}\sa - \*p^\star\sa) \right\rbrace \geq \psi\sa \right] \\
		& \stackrel{(a)}{\leq}  S \max_{i \in \states} \prob \left[ \*1_i W (\bar{\*p}\sa - \*p^\star\sa) \geq \psi\sa \right] +  S \max_{i \in \states} \prob \left[- \*1_i W(\bar{\*p}\sa - \*p^\star\sa) \geq \psi\sa \right] \\
		& \stackrel{(b)}{\leq} 2 \sum_{i=1}^S \exp \left(-2\frac{\psi\sa^2 n}{w_{i}^2} \right)
	\end{align*}
	$(a)$ follows from union bound and $(b)$ follows from the Hoeffding's inequality since $\*1 \tr_i \bar{\*p} \in [0,1]$ for any $i \in \states$ and its mean is $\*1_i \tr \*p\opt$. 
	
\end{proof}

\begin{figure}
	\centering
	\includegraphics[width=0.48\textwidth]{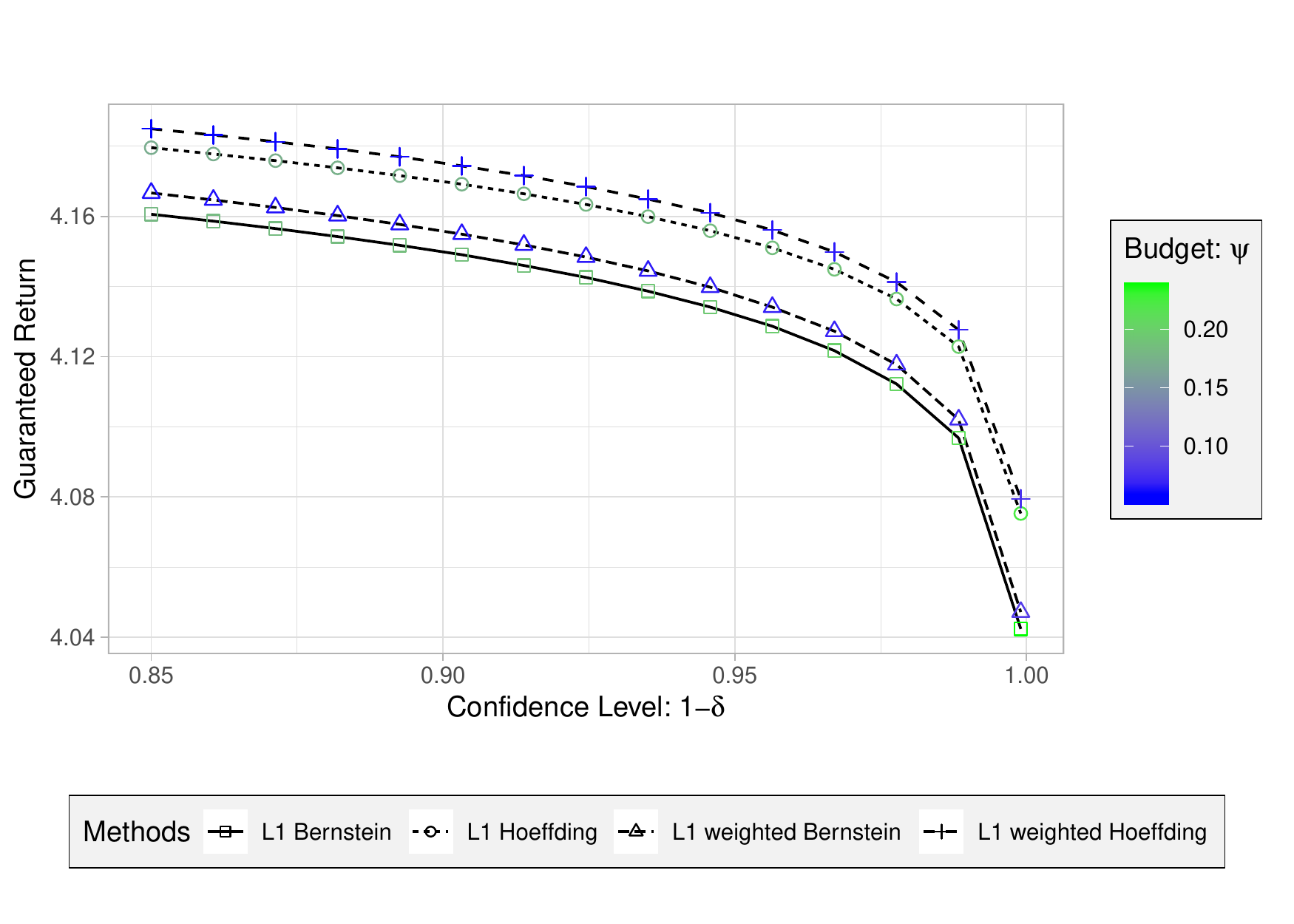}
	\caption{\bahram{remove one of these figures} Single state Bellman update: A comparison between Hoeffding and Bernstein sampling bounds for weighted $L_1$ ambiguity sets. The  value function is monotonic $v = [1, 2, 3, 4, 5]$. \bahram{a good example}}
	\label{fig:single_update_bernstein_good}
\end{figure}

\begin{figure}
	\centering
	\includegraphics[width=0.48\textwidth]{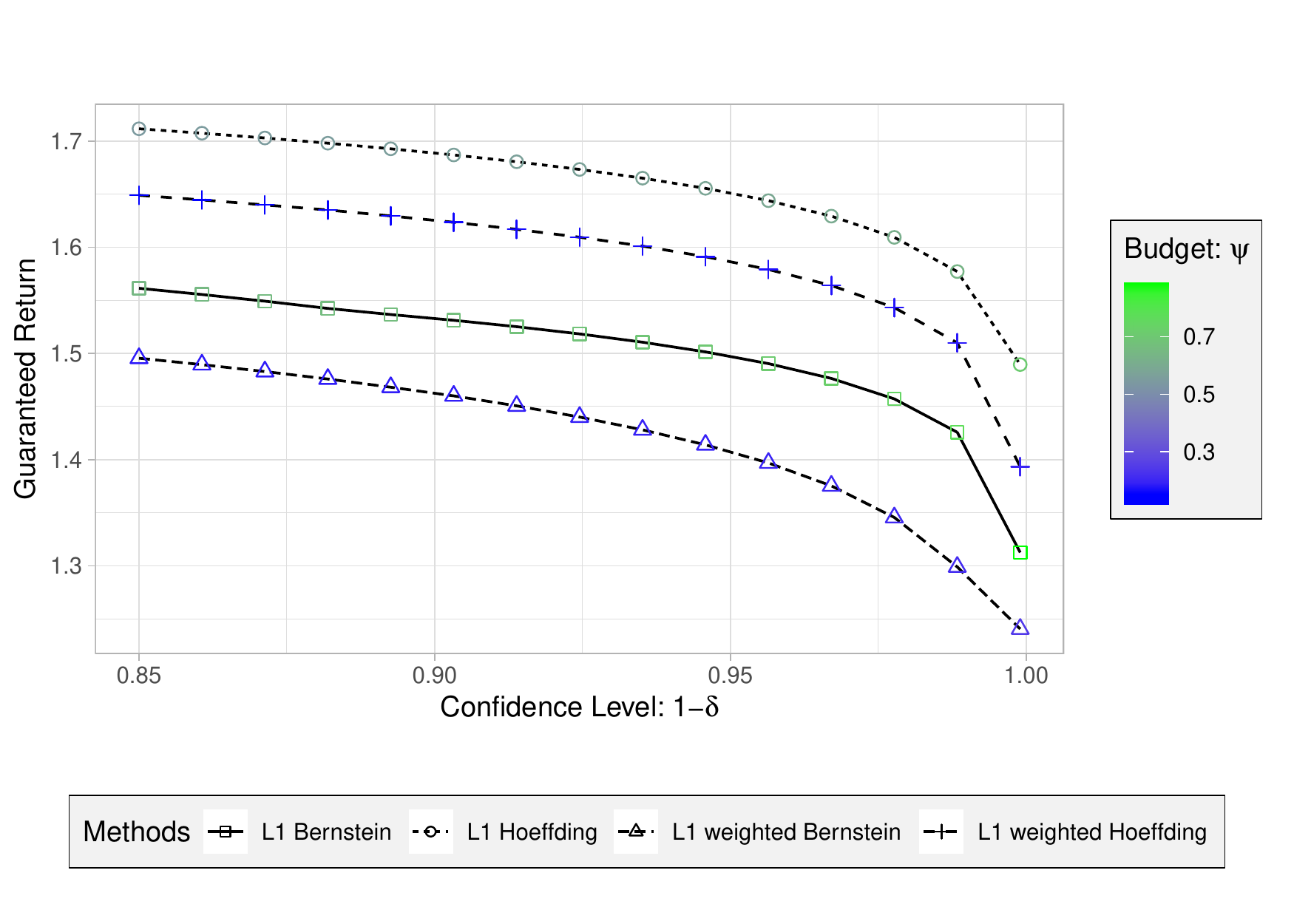}
	\caption{Single state Bellman update: A comparison between Hoeffding and Bernstein sampling bounds for weighted $L_1$ ambiguity sets. The  value function is monotonic $v = [1, 2, 3, 4, 5]$. \bahram{a bad example}}
	\label{fig:single_update_bernstein_bad}
\end{figure}

\begin{figure}
	\begin{center}
		\begin{tikzpicture}[>=stealth',
		shorten > = 1pt,
		node distance = 0.7cm,
		el/.style = {inner sep=1pt, align=left, sloped},
		every label/.append style = {font=\tiny}
		]
		\node (q0) [state,thick,inner sep=1pt,minimum size=0.5pt ]     {$s_0$};
		\node (q1) [state,thick,right=of q0,inner sep=1pt,minimum size=0.5pt]   {$s_1$};
		\node (q2) [state,line width=0pt, draw=white,right=of q1,inner sep=1pt,minimum size=0.5pt]   {$\cdots$};
		\node (q4) [state,thick,right=of q2,inner sep=1pt,minimum size=0.5pt]   {$s_4$};
		\node (q5) [state,thick,right=of q4,inner sep=1pt,minimum size=0.5pt]   {$s_5$};
		\path[->] 
		(q0)  edge [in=260,out=280,loop, dashed] node[el,below, font=\tiny] {$(1,r=5)$}   (q0)
		(q0)  edge [in=80,out=100,loop] node[el,above, font=\tiny] {$0.7$}                    (q0)
		(q1)  edge [in=80,out=100,loop] node[el,above, font=\tiny] {$0.6$}                    (q1)
		(q0)  edge [out=60, bend right=-50, in=150]  node[el,above, font=\tiny]  {$0.3$}      (q1)
		(q1)  edge [out=90,bend left=10, in=215]  node[el,below, font=\tiny]  {$0.1$}        (q0)
		(q1)  edge [bend left=50, dashed]  node[el,below, font=\tiny]  {$1$}                  (q0)
		(q2)  edge [in=80,out=100,loop] node[el,above, font=\tiny] {$0.6$}                    (q2)
		(q1)  edge [out=60, bend right=-50, in=150]  node[el,above, font=\tiny]  {$0.3$}     (q2)
		(q2)  edge [out=90,bend left=10, in=215]  node[el,below, font=\tiny]  {$0.1$}        (q1)
		(q2)  edge [bend left=50, dashed]  node[el,below, font=\tiny]  {$1$}                  (q1)
		(q4)  edge [in=80,out=100,loop] node[el,above, font=\tiny] {$0.6$}                    (q4)
		(q2)  edge [out=60, bend right=-50, in=150]  node[el,above, font=\tiny]  {$0.3$}     (q4)
		(q4)  edge [out=90,bend left=10, in=215]  node[el,below, font=\tiny]  {$0.1$}        (q2)
		(q4)  edge [bend left=50, dashed]  node[el,below, font=\tiny]  {$1$}                  (q2)
		(q5)  edge [in=80,out=100,loop] node[el,above, font=\tiny] {$(0.3, r=10000)$}                    (q5)
		(q4)  edge [out=60, bend right=-50, in=150]  node[el,above, font=\tiny]  {$0.3$}     (q5)
		(q5)  edge [out=90,bend left=10, in=215]  node[el,below, font=\tiny]  {$0.7$}        (q4)
		(q5)  edge [bend left=50, dashed]  node[el,below, font=\tiny]  {$1$}                  (q4);
		\end{tikzpicture}
	\end{center}
	\caption{RiverSwim problem with six states and two actions (left- dashed arrow, right- solid arrow). The agent starts in either $s_1$ or $s_2$.}\label{fig:riverswim}
\end{figure}

\eat{
	
	\begin{lemma}
		\label{lemma:bernstein}
		($L_1$ Error bounded with Bernstein's inequality). For a given $s \in \states$ and $a \in \actions$ we have:
		\begin{equation*}
		\label{eq:lemma1}
		\prob \left[  \norm{\bar{\*p}\sa - \*p^\star\sa }_1 \geq \psi\sa \right] \leq (2^{\states} - 2) \exp \left( -\frac{3\psi^2\sa n\sa}{6+4\psi\sa} \right) ~.
		\end{equation*}
		
		And equivalently in terms of $\delta$:
		\begin{equation*}
		\prob \left[  \norm{\bar{\*p}\sa - \*p^\star\sa }_1 \geq \frac{4m+\sqrt{16m^2+72mn}}{6n} \right] \leq \delta~,
		\end{equation*}
		where $m = \log\frac{2^{\states}-2}{\delta}$.
	\end{lemma}
	
	\begin{proof}
		First, we will express the $L_1$ distance between two distributions $\bar{\*p}$ and $\*p^\star$ in terms of an optimization problem. Let $\qeu \in 2^\states$ be the indicator vector for some subset $\qeu \subset \states$.
		\begin{align*}
			& \norm{\bar{\*p}\sa - \*p^\star\sa}_1 = \max_\*z \left\lbrace  \*z\tr (\bar{\*p}\sa - \*p^\star\sa) : \norm{\*z}_\infty \leq 1 \right\rbrace    \\ 
			&= \max_{\qeu \in 2^\states} \left\lbrace \*1_\qeu \tr (\bar{\*p}\sa - \*p^\star\sa) - ( \*1 - \*1_\qeu)\tr (\bar{\*p}\sa - \*p^\star\sa) : 
			0 < |\qeu| < m \right\rbrace 
			\\ 
			&\stackrel{(a)}{=} 2 \max_{\qeu \in 2^\states} \left\lbrace \*1_\qeu \tr (\bar{\*p}\sa - \*p^\star\sa)  : 0 < |\qeu| < m \right\rbrace ~.
		\end{align*}
		Here $(a)$ holds because $\*1\tr (\bar{\*p}\sa - \*p^\star\sa) = 0$.
		
		Using the expression above, we can bound the probability in the lemma as follows:
		\begin{align*}
			\prob \left[  \norm{\bar{\*p}\sa - \*p^\star\sa }_1 \geq \psi\sa \right] 
			&= \prob \left[ 2 \max_{1_{\qeu} \in 2^{\states}} \left\lbrace \*1_{\qeu}\tr (\bar{\*p}\sa - \*p^\star\sa) :  0 < |\qeu| < m \right\rbrace \geq \psi \right]\\
			& = \prob \left[ \max_{1_{\qeu} \in 2^{\states}} \left\lbrace \*1_{\qeu}\tr (\bar{\*p}\sa - \*p^\star\sa) :  0 < |\qeu| < m \right\rbrace \geq \frac{\psi}{2} \right]\\
			& = \prob \left[ \left\lbrace \*1_{\qeu_1}\tr (\bar{\*p}\sa - \*p^\star\sa) :  0 < |\qeu_1| < m \right\rbrace \geq \frac{\psi}{2} \right. \\
			& \left. \hspace{0.7cm} \cup \left\lbrace \*1_{\qeu_2}\tr (\bar{\*p}\sa - \*p^\star\sa) :  0 < |\qeu_2| < m \right\rbrace \geq \frac{\psi}{2}  \right.\\
			& \left. \hspace{0.7cm} \ldots\ldots \right.\\
			& \left. \hspace{0.7cm} \cup \left\lbrace \*1_{\qeu_{2^{\states}-1}}\tr (\bar{\*p}\sa - \*p^\star\sa) :  0 < |\qeu_{2^{\states}-1}| < m \right\rbrace \geq \frac{\psi}{2} \right]\\
			& \le \prob \left[ \left\lbrace \*1_{\qeu_1}\tr (\bar{\*p}\sa - \*p^\star\sa) :  0 < |\qeu_1| < m \right\rbrace \geq \frac{\psi}{2} \right. \\
			& \left. \hspace{0.7cm} + \left\lbrace \*1_{\qeu_2}\tr (\bar{\*p}\sa - \*p^\star\sa) :  0 < |\qeu_2| < m \right\rbrace \geq \frac{\psi}{2}  \right.\\
			& \left. \hspace{0.7cm} \ldots\ldots \right.\\
			& \left. \hspace{0.7cm} + \left\lbrace \*1_{\qeu_{2^{\states}-1}}\tr (\bar{\*p}\sa - \*p^\star\sa) :  0 < |\qeu_{2^{\states}-1}| < m \right\rbrace \geq \frac{\psi}{2} \right]\\
			&\leq (2^{\states} - 2) \max_{\qeu \in 2^{\states}} \left\lbrace \prob \left[ \*1_{\qeu} \tr (\bar{\*p}\sa - \*p^\star\sa) \geq \frac{\psi}{2} \right] : 0 < |\qeu| < m 		 \right\rbrace \\
			& \stackrel{(a)}{\leq}  (2^{\states} - 2) \exp \left( - \frac{\psi^2 n}{8(\sigma^2+c\frac{\psi}{6})} \right)\\
			& \stackrel{(b)}{\leq} (2^{\states} - 2) \exp \left( -\frac{3\psi^2 n}{6+4\psi} \right)
		\end{align*}
		Here $(a)$ follows from the Bernstein's inequality since $\*1_{\qeu}\tr \bar{\*p} \in [0,1]$ for any $\qeu$ and its mean is $\*1_{\qeu}\tr \*p\opt$. As $c$
		is the bound of the random variable $\prob \left[  \bar{\*p}\sa \leq c \right] = 1$, we have $c=1$. And $\sigma^2 = \frac{1}{S}\sum_{i=1}^{S}Var(\bar{p}_{i,a})$, where each $\bar{p}_{i,a}$ is \emph{Bernoulli} distributed. $(b)$ follows by upper bounding the variance of $\bar{p}_{i,a}$, $\sigma^2 = \frac{1}{S}\sum_{i=1}^{S}Var(\bar{p}_{i,a}) \leq \frac{1}{S} (S \frac{1}{4}) = \frac{1}{4}$. 
	\end{proof}

	\begin{lemma}
		(Weighted $L_1$ Error bounded with Bernstein's inequality)	
		Suppose that $\bar{\*p}\sa$ is the empirical estimate of the transition probability obtained from $n\sa$ samples for each $s \in \states$ and $ a \in \actions$. Then:

		\begin{fleqn}
			\begin{equation*}
			\begin{aligned}
			& \prob \left[ \norm{\bar{\*p}\sa - \*p^\star\sa}_{1,\*w}  \geq \psi\sa  \right] \leq  2 \sum_{i = 1}^n 2^{S - i} \exp  \left( - \frac{3\psi^2n}{6 w_i^2 + 4\psi w_i} \right)
			\end{aligned} 
			\end{equation*}
		\end{fleqn}
		where $\*w = \{ w_1, \ldots , w_n\}$ is the vector of weights over $L_1$ norm, and sorted in non-increasing order.

		\begin{proof}
			Let $\*q\sa = \bar{\*p}\sa - \*p^\star\sa$. To shorten notation in the proof, we omit the $s, a$ indexes when there is no ambiguity. We assume that all weights are non-negative. First, we will express the $L_1$ norm of $\*q$ in terms of an optimization problem. It's worth noting that $\*1\tr \*q =0$.  Let $\*1_{\qeu_1}, \*1_{\qeu_2} \in \real^\states$ be the indicator vectors for some subsets $\qeu_1,\qeu_2 \subset \states$ where $\qeu_2 = \states \setminus \qeu_1$.
			
			\begin{fleqn}
				\begin{equation*}
				\begin{aligned}
				& \norm{\*q}_{1,w} = \max_\*z \left\lbrace  \*z\tr \*q : \norm{\*z}_{\infty,\frac{1}{w}} \leq 1 \right\rbrace    \\ 
				&= \max_{\qeu_1,\qeu_2 \in 2^\states} \left\lbrace \*1_{\qeu_1} \tr W \*q +  \*1_{\qeu_2}\tr W (-\*q)  :  \qeu_2 = \states \setminus \qeu_1 \right\rbrace 
				\\ 
				\end{aligned}
				\end{equation*}
			\end{fleqn}
			Here weights are on the diagonal entries of $W$. Using the expression above, we can bound the probability as follows:
			\begin{fleqn}
				\begin{equation*}
				\begin{aligned}
				& \prob \left[ \max_{\qeu_1,\qeu_2 \in 2^\states} \left\lbrace \*1_{\qeu_1}\tr W \*q + \*1_{\qeu_2}\tr W (-\*q) \right\rbrace  \geq \psi \right]  \\
				& \stackrel{(a)}{\leq} \prob \left[ \max_{{\qeu_1} \in 2^\states} \left\lbrace \*1_{\qeu_1}\tr W \*q  \right\rbrace  \geq \frac{\psi}{2} \right] + \\ 
				& \qquad \prob \left[ \max_{{\qeu_2} \in 2^\states} \left\lbrace \*1_{\qeu_2}\tr W (-\*q)  \right\rbrace  \geq \frac{\psi}{2} \right]  \\
				& \leq \sum_{{\qeu_1} \in 2^{\states}} \left\lbrace \prob \left[ \*1_{\qeu_1} \tr W \*q \right] \geq \frac{\psi}{2}   \right\rbrace + \\ 
				& \qquad  \sum_{{\qeu_2} \in 2^{\states}} \left\lbrace \prob \left[ \*1_{\qeu_2} \tr W (-\*q) \right] \geq \frac{\psi}{2}   \right\rbrace\\ 
				& = \sum_{{\qeu_1} \in 2^{\states}} \left\lbrace \prob \left[ \*1_{\qeu_1} \tr W (\bar{\*p} - \*p^\star) \right] \geq \frac{\psi}{2}   \right\rbrace \\
				&  \qquad + \sum_{{\qeu_2} \in 2^{\states}} \left\lbrace \prob \left[ \*1_{\qeu_2} \tr W (- \bar{\*p} + \*p^\star) \right] \geq \frac{\psi}{2}   \right\rbrace   \\
				& \stackrel{(b)}{\leq} \sum_{{\qeu_1} \in 2^{\states}} \exp \left( - \frac{\psi^2 n}{8(\norm{\*1_{\qeu_1} \tr W}_\infty^2\sigma^2+c\frac{\psi}{6})}\right) \\
				&  \qquad + \sum_{{\qeu_2} \in 2^{\states}} \exp \left( - \frac{\psi^2 n}{8(\norm{\*1_{\qeu_2} \tr W}_\infty^2\sigma^2+c\frac{\psi}{6})}\right)   \\
				& \stackrel{(c)}{\leq} \sum_{{\qeu_1} \in 2^{\states}} \exp \left( - \frac{3\psi^2 n}{6\norm{\*1_{\qeu_1} \tr W}_\infty^2+4\psi\norm{\*1_{\qeu_1} \tr W}_\infty}\right) \\
				&  \qquad + \sum_{{\qeu_2} \in 2^{\states}} \exp \left( - \frac{3\psi^2 n}{6\norm{\*1_{\qeu_2} \tr W}_\infty^2+4\psi\norm{\*1_{\qeu_2} \tr W}_\infty}\right)   \\
				& \stackrel{(d)}{=}   2 \sum_{i = 1}^n 2^{S - i} \exp  \left( - \frac{3\psi^2n}{6 w_i^2 + 4\psi w_i} \right)
				\end{aligned} 
				\end{equation*}
			\end{fleqn}
			
			Here $(a)$ follows from union bound, $(b)$ follows from Bernstein's inequality. As $c$
			is the bound of the random variable $\prob \left[  w\bar{\*p}\sa \leq wc \right] = 1$, and c=1 in unweighted (without $w$) case, we have $c=w$. And $\sigma^2 = W\frac{1}{S}\sum_{i=1}^{S}Var(\bar{p}_{i,a})$, where each $\bar{p}_{i,a}$ is \emph{Bernoulli} distributed. $(c)$ follows by upper bounding the variance of $\bar{p}_{i,a}$, $\sigma^2 = W\frac{1}{S}\sum_{i=1}^{S}Var(\bar{p}_{i,a}) \leq W\frac{1}{S} (S \frac{1}{4}) = W\frac{1}{4}$. $(d)$ follows by sorting weights $\*w = \{w_1, \ldots, w_n \}$ in non-increasing order.
		\end{proof}
	\end{lemma}
}
}
\end{document}